\documentclass{article}

\usepackage{microtype}
\usepackage{graphicx}
\usepackage{subcaption}
\usepackage{booktabs} % for professional tables
\usepackage{longtable} % for multipage notation table
\usepackage{xcolor} % for colors, load early for definecolor
\usepackage{colortbl} % for colored tables
\usepackage{array} % for better column formatting

\definecolor{tableheader}{RGB}{230, 240, 250}
\definecolor{tablerowalt}{RGB}{245, 248, 252}
\definecolor{tablerule}{RGB}{100, 130, 180}

\usepackage{hyperref}

\let\origaddcontentsline\addcontentsline
\usepackage[preprint]{icml2026}

\usepackage{amsmath}
\usepackage{amssymb}
\usepackage{mathtools}
\usepackage{amsthm}

\usepackage[capitalize,noabbrev]{cleveref}
\crefname{appendix}{Appendix}{Appendices}
\Crefname{appendix}{Appendix}{Appendices}

\usepackage{tcolorbox}
\tcbuselibrary{skins,breakable}
\usepackage{etoolbox}

\definecolor{theoremboxbg}{RGB}{245, 245, 245}

\tcbset{
  theorembox/.style={
    enhanced,
    breakable,
    colback=theoremboxbg,
    colframe=theoremboxbg,
    boxrule=0pt,
    frame hidden,
    sharp corners,
    left=0pt,
    right=0pt,
    top=0pt,
    bottom=0pt,
    before skip=10pt,
    after skip=10pt,
    boxsep=0pt,
  }
}

\makeatletter
\def\th@remark{%
  \thm@headfont{\bfseries}%
  \normalfont % body font
}
\makeatother

\theoremstyle{plain}
\newtheorem{theorem}{Theorem}[section]
\newtheorem{proposition}[theorem]{Proposition}

\theoremstyle{definition}
\newtheorem{definition}[theorem]{Definition}

\theoremstyle{remark}
\newtheorem{remark}[theorem]{Remark}

\BeforeBeginEnvironment{theorem}{\begin{tcolorbox}[theorembox]}
\AfterEndEnvironment{theorem}{\end{tcolorbox}}
\BeforeBeginEnvironment{proposition}{\begin{tcolorbox}[theorembox]}
\AfterEndEnvironment{proposition}{\end{tcolorbox}}
\BeforeBeginEnvironment{definition}{\begin{tcolorbox}[theorembox]}
\AfterEndEnvironment{definition}{\end{tcolorbox}}
\BeforeBeginEnvironment{remark}{\begin{tcolorbox}[theorembox]}
\AfterEndEnvironment{remark}{\end{tcolorbox}}

\usepackage[textsize=tiny]{todonotes}

\usepackage[utf8]{inputenc}
\usepackage[T1]{fontenc}
\usepackage{bm}

\usepackage{algorithm}

\usepackage{algpseudocode}
\usepackage{setspace}

\newcommand{\mcN}{\mathcal{N}}

\algrenewcommand\alglinenumber[1]{\footnotesize #1:}
\algrenewcommand{\algorithmiccomment}[1]{\hfill$\triangleright$ #1}

\algnewcommand{\algorithmicRequire}{Require:}
\algnewcommand{\algorithmicEnsure}{Ensure:}
\algnewcommand{\AlgoPhase}[1]{\Statex \hspace{-\algorithmicindent}\textsc{#1}}
\usepackage{enumitem}
\setlist[enumerate]{leftmargin=1.5em, noitemsep, topsep=1pt, parsep=1pt}
\setlist[itemize]{leftmargin=1.5em, noitemsep, topsep=1pt, parsep=1pt}
\usepackage{titletoc} % For partial TOCs

\newtheorem{example}[theorem]{Example}
\BeforeBeginEnvironment{example}{\begin{tcolorbox}[theorembox]}
\AfterEndEnvironment{example}{\end{tcolorbox}}

\DeclareMathOperator*{\argmin}{argmin}

\DeclareMathOperator{\sign}{sign}

\DeclareMathOperator{\diag}{diag}

\newcommand{\R}{\mathbb{R}}
\newcommand{\N}{\mathbb{N}}

\newcommand{\mcL}{\mathcal{L}}
\newcommand{\mcD}{\mathcal{D}}
\newcommand{\mcB}{\mathcal{B}}

\newcommand{\mcO}{\mathcal{O}}

\newcommand{\bmu}{\bm{\mu}}
\newcommand{\balpha}{\bm{\alpha}}
\newcommand{\bc}{\bm{c}}

\icmltitlerunning{MSN-PINN}

\begin{document}

\twocolumn[
	\icmltitle{Discovering Scaling Exponents with Physics-Informed\\M\"untz-Sz\'asz Networks}

	% It is OKAY to include author information, even for blind submissions: the
	% style file will automatically remove it for you unless you've provided
	% the [accepted] option to the icml2026 package.

	% List of affiliations: The first argument should be a short identifier you
	% will use later to specify author affiliations Academic affiliations
	% should list Department, University, City, Region, Country Industry
	% affiliations should list Company, City, Region, Country

	% You can specify symbols, otherwise they are numbered in order. Ideally, you
	% should not use this facility. Affiliations will be numbered in order of
	% appearance and this is the preferred way.
	% \icmlsetsymbol{equal}{*}

	\begin{icmlauthorlist}
		\icmlauthor{Gnankan Landry Regis N'guessan}{axi,na,ar}
		\icmlauthor{Bum Jun Kim}{ut}
		%\icmlauthor{Firstname3 Lastname3}{comp}
		%\icmlauthor{Firstname4 Lastname4}{sch}
		%\icmlauthor{Firstname5 Lastname5}{yyy}
		%\icmlauthor{Firstname6 Lastname6}{sch,yyy,comp}
		%\icmlauthor{Firstname7 Lastname7}{comp}
		%\icmlauthor{}{sch}
		%\icmlauthor{Firstname8 Lastname8}{sch}
		%\icmlauthor{Firstname8 Lastname8}{yyy,comp}
		%\icmlauthor{}{sch}
		%\icmlauthor{}{sch}
	\end{icmlauthorlist}

	\icmlaffiliation{axi}{Axiom Research Group}
    \icmlaffiliation{na}{Department of Applied Mathematics and Computational Science, The Nelson Mandela African Institution of Science and Technology (NM-AIST), Arusha, Tanzania}
    \icmlaffiliation{ar}{African Institute for Mathematical Sciences (AIMS), Research and Innovation Centre (RIC), Kigali, Rwanda}
	\icmlaffiliation{ut}{Graduate School of Engineering, The University of Tokyo, Japan}

	\icmlcorrespondingauthor{Bum Jun Kim}{bumjun.kim@weblab.t.u-tokyo.ac.jp}

	% You may provide any keywords that you find helpful for describing your
	% paper; these are used to populate the "keywords" metadata in the PDF but
	% will not be shown in the document
	\icmlkeywords{Machine Learning, ICML}

	\vskip 0.3in
]

% this must go after the closing bracket ] following \twocolumn[ ...

% This command actually creates the footnote in the first column listing the
% affiliations and the copyright notice. The command takes one argument, which
% is text to display at the start of the footnote. The \icmlEqualContribution
% command is standard text for equal contribution. Remove it, just {}, if you
% do not need this facility.

% Use ONE of the following lines. DO NOT remove the command.
% If you have no special notice, KEEP empty braces:
\printAffiliationsAndNotice{}  % no special notice, required even if empty
% Or, if applicable, use the standard equal contribution text:
% \printAffiliationsAndNotice{\icmlEqualContribution}

\begin{abstract}
	Physical systems near singularities, interfaces, and critical points exhibit power-law scaling, yet standard neural networks leave the governing exponents implicit. We introduce physics-informed M\"untz-Sz\'asz Networks (MSN-PINN), a power-law basis network that treats scaling exponents as trainable parameters. The model outputs both the solution and its scaling structure. We prove identifiability, or unique recovery, and show that, under these conditions, the squared error between learned and true exponents scales as $\mcO(|\mu - \alpha|^2)$. Across experiments, MSN-PINN achieves single-exponent recovery with 1--5\% error under noise and sparse sampling. It recovers corner singularity exponents for the two-dimensional Laplace equation with 0.009\% error, matches the classical result of \citet{kondrat1967boundary}, and recovers forcing-induced exponents in singular Poisson problems with 0.03\% and 0.05\% errors. On a 40-configuration wedge benchmark, it reaches a 100\% success rate with 0.022\% mean error. Constraint-aware training encodes physical requirements such as boundary condition compatibility and improves accuracy by three orders of magnitude over naive training. By combining the expressiveness of neural networks with the interpretability of asymptotic analysis, MSN-PINN produces learned parameters with direct physical meaning.
\end{abstract}

%=============================================================================
\section{Introduction}
\label{sec:intro}
%=============================================================================

\begin{figure*}[t]
	\centering
	\includegraphics[width=\textwidth]{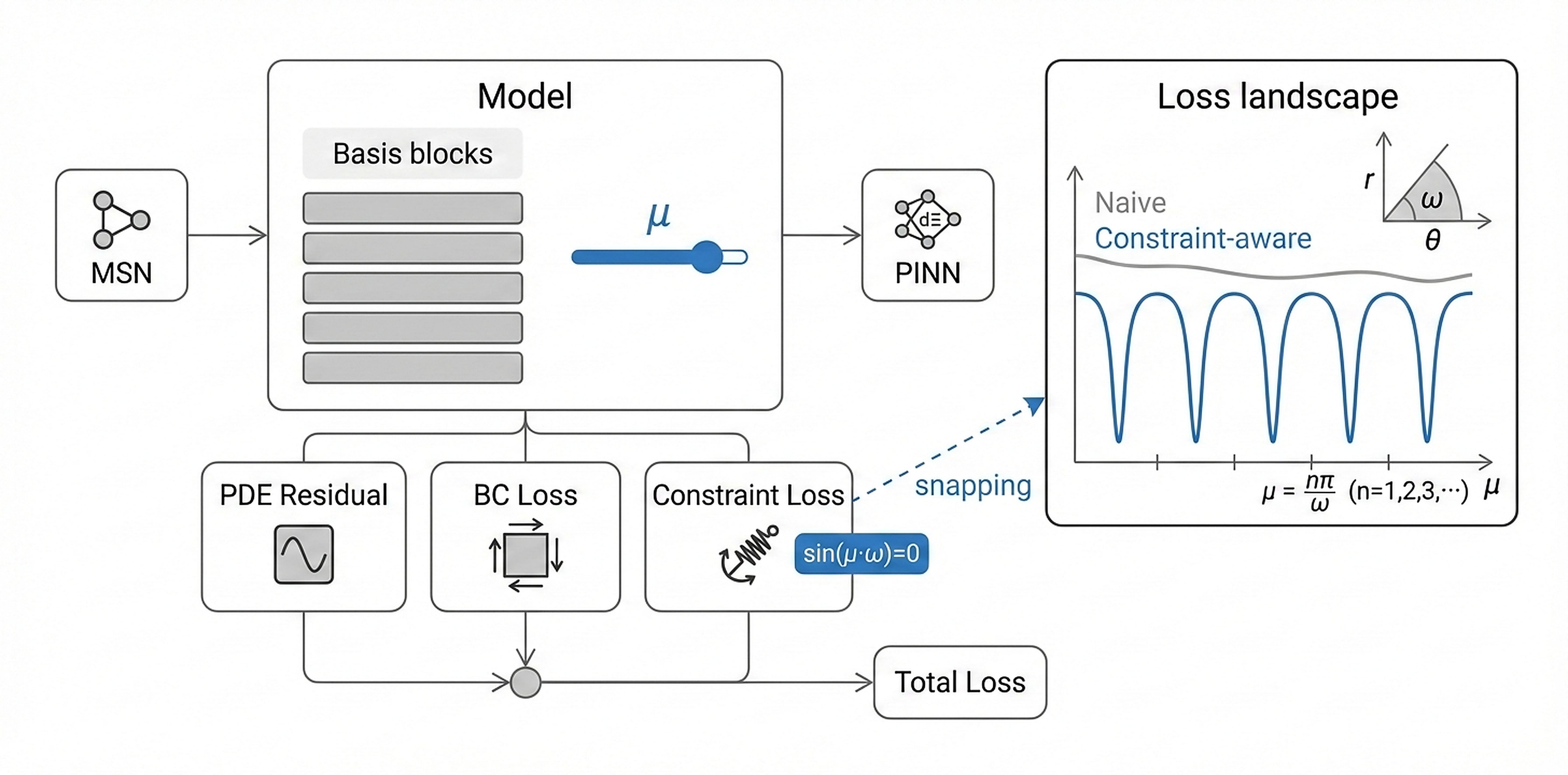}
	\caption{MSN-PINN discovers scaling exponents directly from physics. Unlike standard neural networks that hide exponents within millions of weights, the MSN ansatz $u(x) = \sum_k c_k x^{\mu_k}$ outputs the exponents $\{\mu_k\}$ as explicit parameters. Physics-informed losses enforce PDE residuals and BCs to guide exponent learning. For wedge singularities, constraint-aware training adds $\mathcal{L}_{\mathrm{constraint}} = \sum_k |c_k| \sin^2(\mu_k \omega)$, which drives the exponents toward physically valid values $\mu = n\pi/\omega$---improving accuracy from 14.6\% to 0.009\% error.}
	\label{fig:overview}
\end{figure*}

Power-law scaling appears throughout physics, including stress fields near crack tips that diverge as $r^{-1/2}$ \cite{anderson2017fracture}, turbulent energy spectra that follow $k^{-5/3}$ \cite{kolmogorov1941local}, and Laplace solutions at re-entrant corners that scale as $r^{2/3}$ \cite{kondrat1967boundary}. These exponents encode symmetries, conservation laws, and the mathematical structure of the governing equations. In these settings, identifying them is a primary scientific goal. Fracture toughness, universality class membership, and solution regularity depend directly on these scaling exponents.

These settings motivate the need for methods that identify exponents as part of solving the governing equations. To that end, we combine M\"untz-Sz\'asz Networks (MSN)---which represent solutions using power-law bases with trainable exponents---with physics-informed neural networks (PINNs) that enforce partial differential equation (PDE) residuals and boundary conditions (BCs). We refer to the resulting approach as MSN-PINN and summarize it in \Cref{fig:overview}.

Despite their universal approximation capabilities \cite{cybenko1989approximation}, neural networks prioritize accurate fits over explaining why a function takes its particular form. Their opaque representations limit scientific applications. PINNs encode governing equations through residual penalties and can accurately solve both forward and inverse problems, yet even PINNs hide the underlying solution structure. While verifying that a learned function satisfies the governing PDE is straightforward, extracting scaling exponents requires fragile post-hoc analysis.

The difficulty lies in the architecture. Standard multilayer perceptrons (MLPs) represent functions as sums of piecewise-linear hinges when using rectified linear unit (ReLU) activations. Approximating $x^\alpha$ for $\alpha \in (0,1)$ to $\epsilon$-accuracy requires $N = \Omega(\epsilon^{-2})$ units \cite{nguessan2025msn}, and the exponent hides in millions of weights. Smooth activations such as tanh or sigmoid face analogous challenges. Their bounded derivatives prevent efficient representation of unbounded power-law behavior. Even with accurate fits, the physically meaningful exponent stays hidden.

\subsection{The MSN Approach}

MSNs \cite{nguessan2025msn} address this gap by making scaling exponents explicit learnable parameters rather than implicit features of hidden representations. We focus on an ansatz form of MSN tailored to exponent discovery:
\begin{equation}
	u_\theta(x) = \sum_{k=1}^{K} c_k x^{\mu_k},
	\label{eq:msn_intro}
\end{equation}
where both the coefficients $\{c_k\}$ and the exponents $\{\mu_k\}$ are optimized during training. Unlike the millions of weights in standard neural architectures, these learned exponents are directly interpretable physical quantities. For example, when trained on data generated from $f(x)=x^{0.5}$, MSN recovers $\mu \approx 0.5$ as an explicit parameter.

The full MSN architecture introduced in prior work includes signed even and odd power bases and compositional layer stacking. Here, we specialize in a single M\"untz expansion to isolate the identifiability and recovery of scaling laws. While previous studies have established MSN as a powerful approximation architecture, we address a more fundamental question about when true physical exponents can be reliably recovered from data or physics-informed PDE constraints.

\subsection{Contributions}

We make four key contributions. We embed the MSN within a physics-informed framework, enabling exponent discovery from PDE residuals without requiring labeled solution data; see \Cref{sec:method}. We establish when exponents can be uniquely recovered and derive stability bounds showing that recovery accuracy degrades gracefully with noise, governed by a separation condition analogous to the Rayleigh Limit; see \Cref{thm:identifiability} and \Cref{thm:stability}. We introduce constraint-aware losses that encode physical requirements such as edge BCs, improving accuracy by three orders of magnitude over naive MSN-PINN approaches when the PDE provides insufficient gradient signals for exponent selection; see \Cref{sec:constraint_aware}. We demonstrate exponent discovery across multiple settings; see \Cref{sec:experiments}. A singular ordinary differential equation (ODE) yields 1.31\% error from physics alone. A corner singularity achieves 0.009\% error, matching the analytical result of \citet{kondrat1967boundary}. Singular forcing produces 0.03\% and 0.05\% errors on dual exponents. A 40-configuration wedge benchmark reaches a 100\% success rate with 0.022\% mean error; see \Cref{app:additional_experiments}.

MSN combines the insights of asymptotic analysis with the scalability of neural networks, learning explicit exponents without requiring closed-form solutions. These learned exponents validate theory, suggest new physics, diagnose model misspecification, and enable extrapolation beyond the training domain.

%=============================================================================
\section{Related Work}
\label{sec:related}
%=============================================================================

PINNs enforce PDE residuals to solve both forward and inverse problems \cite{raissi2019physics}. Variational PINNs improve stability for stiff PDEs \cite{kharazmi2021hp}, domain decomposition improves scalability \cite{jagtap2020extended}, and conservative architectures encode invariants \cite{jagtap2020conservative}. Despite these advances, PINNs can still struggle with stiff problems \cite{krishnapriyan2021characterizing} and tend to exhibit spectral bias toward low-frequency modes \cite{rahaman2019spectral}. For singular solutions, \citet{jin2024solving} embed prescribed singular forms. MSN allows the exponents to be learned.

Sparse Identification of Nonlinear Dynamics (SINDy) discovers governing equations through sparse regression over candidate libraries \cite{brunton2016discovering}. Extensions to PDEs apply the same idea to spatiotemporal data \cite{rudy2017data}, and latent-coordinate variants infer dynamics in hidden variables \cite{champion2019data}. Artificial Intelligence Feynman (AI Feynman) uses dimensional analysis and symmetry to search for symbolic forms \cite{udrescu2020ai}. Although these methods discover discrete structure, they rely on fixed candidate exponents. MSN learns continuous exponents directly.

Operator learning methods map functions to functions: DeepONet uses branch and trunk networks for inputs and coordinates \cite{lu2021learning}, while Fourier Neural Operators parameterize kernels in the spectral domain \cite{li2021fourier}. Kolmogorov-Arnold Networks (KANs) replace fixed activations with learnable splines \cite{liu2024kan,liu2025kan}. KANs target smooth functions; MSN targets power-law singularities with explicit exponents.

\citet{kondrat1967boundary} characterized elliptic corner singularities, and \citet{grisvard1985elliptic} and \citet{dauge1988elliptic} further developed this theory. Numerical methods exploit this structure with graded meshes \cite{babuska1979direct}, finite element methods (FEMs) such as $hp$-finite element methods \cite{schwab1998p}, and extended finite element methods (XFEM) for enrichment \cite{moes1999finite,babuska1997partition}. These methods prescribe singular bases analytically. MSN learns the bases from physics.

The classical M\"untz-Sz\'asz theorem \cite{muntz1914,borwein1995polynomials} characterizes when the spans of a set $\{x^{\lambda_n}\}$ are dense in $C[0,1]$. Computational applications \cite{shen2016muntz} achieve exponential convergence for singular problems. Prior work by \citet{nguessan2025msn} introduced trainable exponents; we extend to physics-informed discovery.

%=============================================================================
\section{Method}
\label{sec:method}
%=============================================================================

\subsection{The MSN Ansatz: Why Power-Law Bases?}

Many physical singularities have a power-law structure. Rather than approximating $x^{0.5}$ with compositions of smooth functions, we represent it directly in a basis specifically designed for power laws.
\begin{definition}[MSN Ansatz]
	\label{def:msn}
	An MSN with $K$ terms represents functions as
	\begin{equation}
		\label{eq:msn_ansatz}
		u_\theta(x) = \sum_{k=1}^{K} c_k x^{\mu_k}, \text{for } x \in (0, 1],
	\end{equation}
	where $\theta = (\bmu, \bc)$ with exponents $\bmu = (\mu_1, \ldots, \mu_K) \in [\mu_{\min}, \mu_{\max}]^K$ and coefficients $\bc = (c_1, \ldots, c_K) \in \R^K$.
\end{definition}

This representation shines when approximating power-law targets:
\begin{proposition}[Exact Representation]
	If $f(x) = x^\alpha$ with $\alpha \in [\mu_{\min}, \mu_{\max}]$, then the MSN with $K=1$ achieves zero approximation error when $\mu_1 = \alpha$ and $c_1 = 1$.
\end{proposition}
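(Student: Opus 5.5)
The plan is simply to verify membership of the proposed parameters in the admissible set of Definition~\ref{def:msn} and then substitute them into the ansatz \eqref{eq:msn_ansatz}.

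\textbf{Step 1: admissibility.} With $K=1$, the parameter is $\theta=(\mu_1,c_1)$, which must satisfy $\mu_1\in[\mu_{\min},\mu_{\max}]$ and $c_1\in\R$. The choice $\mu_1=\alpha$ is admissible precisely because of the hypothesis $\alpha\in[\mu_{\min},\mu_{\max}]$. The choice $c_1=1$ is admissible trivially.

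\textbf{Step 2: pointwise identity.} For every $x\in(0,1]$ we have
\begin{equation*}
u_\theta(x)=c_1x^{\mu_1}=1\cdot x^{\alpha}=f(x),
\end{equation*}
where $x^\alpha=e^{\alpha\ln x}$ is well defined for all real $\alpha$ because $x>0$.

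\textbf{Step 3: zero error.} Step~2 gives $u_\theta-f\equiv 0$ on $(0,1]$, so any approximation error measured over the domain vanishes. This holds for the sup norm, any $L^p$ norm, or an empirical loss on sample points in $(0,1]$.

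There is no real obstacle. The only point requiring care is Step~1: the exponent box constraint is exactly why the hypothesis $\alpha\in[\mu_{\min},\mu_{\max}]$ is needed. Restricting the domain to $(0,1]$ also sidesteps the possible ill-definedness of $x^\alpha$ at $x=0$ when $\alpha<0$.
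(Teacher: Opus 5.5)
Your proof is correct and is exactly the argument the paper has in mind: the paper gives no separate proof, treating the statement as immediate from substituting $\mu_1=\alpha$, $c_1=1$ into the ansatz, with the hypothesis $\alpha\in[\mu_{\min},\mu_{\max}]$ serving only to make that choice admissible. Your explicit admissibility check is the one point that needed stating.
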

Compared with standard networks, approximating $x^{0.5}$ with ReLU units requires $\mcO(\epsilon^{-2})$ neurons for $\epsilon$-accuracy, and the exponent 0.5 hides in the weights. MSN achieves an exact representation with one term, and the exponent becomes an explicit parameter.

\subsection{Relationship to MSN}
\label{sec:relationship_msn}

We continue the work on MSNs \cite{nguessan2025msn}, which embedded learnable power-law exponents into neural architectures to obtain expressive, interpretable function approximators. Here, we specialize the MSN to its ansatz regime with a single M\"untz expansion:
\begin{equation}
	u_\theta(x) = \sum_{k=1}^{K} c_k |x|^{\mu_k},
\end{equation}
with trainable exponents $\{\mu_k\}$ and linear coefficients $\{c_k\}$. This specialization isolates how MSNs learn exponents while eliminating architectural redundancies that obscure exponent identifiability.

In contrast, the full MSN architecture supports deep composition, parity-aware edge-wise even and odd representations, and can approximate general functions. We can recover the full architecture by stacking multiple M\"untz expansions with inter-layer nonlinearities and signed-domain extensions. We use the reduced ansatz here because it allows us to rigorously analyze when scaling exponents can be identified, derive closed-form PDE derivatives, and optimize stably under physics-informed constraints.

MSNs learn governing scaling laws directly from data and physics. For problems requiring composition or mixed parity, the full MSN architecture can be reintroduced while retaining the identifiability results established here.

\subsection{Physics-Informed MSN: MSN-PINN}

The MSN ansatz represents the solution; physics-informed training supervises learning. MSN-PINN discovers exponents from PDE constraints without requiring labeled solution data.

\subsubsection{Loss Function Design}

Consider a PDE of the form
\begin{equation}
	\label{eq:pde}
	\begin{aligned}
		\mcD[u](x) & = f(x), \text{for } x \in \Omega,         \\
		\mcB[u](x) & = g(x), \text{for } x \in \partial\Omega,
	\end{aligned}
\end{equation}
where $\mcD$ is a differential operator and $\mcB$ encodes BCs. The MSN-PINN loss combines three components:
\begin{equation}
	\label{eq:pinn_loss}
	\mcL_{\text{total}}(\theta) = w_r \mcL_{\text{res}}(\theta) + w_b \mcL_{\text{BC}}(\theta) + w_s \mcL_{\text{sparse}}(\theta).
\end{equation}

The PDE residual loss measures violation of the governing equation at the collocation points $\{x_i^r\}_{i=1}^{N_r} \subset \Omega$:
\begin{equation}
	\mcL_{\text{res}}(\theta) = \frac{1}{N_r} \sum_{i=1}^{N_r} \lvert \mcD[u_\theta](x_i^r) - f(x_i^r) \rvert^2.
\end{equation}
The BC loss enforces boundary data at points $\{x_i^b\}_{i=1}^{N_b} \subset \partial\Omega$:
\begin{equation}
	\mcL_{\text{BC}}(\theta) = \frac{1}{N_b} \sum_{i=1}^{N_b} \lvert \mcB[u_\theta](x_i^b) - g(x_i^b) \rvert^2.
\end{equation}
The sparsity loss encourages simpler solutions with fewer active terms:
\begin{equation}
	\mcL_{\text{sparse}}(\theta) = \frac{1}{K} \sum_{k=1}^{K} |c_k|.
\end{equation}

The sparsity term serves two purposes. First, the $\ell_1$ penalty helps interpret results by suppressing spurious terms, leaving only physically meaningful exponents with significant coefficients. Second, sparsity makes exponents easier to identify by reducing the effective model complexity.

\subsection{Constraint-Aware Training}
\label{sec:constraint_aware}

Naive MSN-PINN training fails on certain problems---not because the model lacks capacity, but because the physics provides an inadequate gradient signal to select exponents. Constraint-aware training adds physics-derived penalties to the loss to supply that signal.

\subsubsection{The Exponent Drift Problem}

Consider Laplace's equation on a wedge domain with an opening angle $\omega$:
\begin{equation}
	\nabla^2 u = 0 \text{ in } \Omega, \text{with } u|_{\theta=0} = u|_{\theta=\omega} = 0.
\end{equation}

Solutions have the separable form $u(r,\theta) = r^\mu \Phi(\theta)$. Substituting this into Laplace's equation yields
\begin{equation}
	\nabla^2 u = r^{\mu-2}[\mu^2 \Phi + \Phi''] = 0.
\end{equation}
Any $\mu$ satisfies this equation provided $\Phi'' + \mu^2 \Phi = 0$, giving $\Phi(\theta) = A\sin(\mu\theta) + B\cos(\mu\theta)$.

This separation reveals the core difficulty. The Laplacian provides no gradient signal for $\mu$. Every exponent $\mu$ yields a harmonic function $r^\mu \sin(\mu\theta)$, so $\mcL_{\text{res}}$ is flat in $\mu$; all such choices achieve zero residual.

In initial experiments on the Corner Singularity problem, this degeneracy caused severe exponent drift. The network matched the boundary data with a harmonic function whose exponents were unrelated to the physical solution, producing a 14.6\% error.

\subsubsection{Physical Constraints Resolve the Degeneracy}

The resolution comes from encoding additional physical requirements. Arbitrary harmonic functions do not automatically satisfy the edge BCs $u|_{\theta=0} = u|_{\theta=\omega} = 0$. For $u = r^\mu \sin(\mu\theta)$:
\begin{align}
	u|_{\theta=0}      & = r^\mu \sin(0) = 0,                                         \\
	u|_{\theta=\omega} & = r^\mu \sin(\mu\omega) = 0, \text{so } \sin(\mu\omega) = 0.
\end{align}
The second condition constrains $\mu$ to a discrete set of values. $\mu \in \{\pi/\omega, 2\pi/\omega, 3\pi/\omega, \ldots\}$. For a re-entrant corner with $\omega = 3\pi/2$, the dominant exponent is $\mu = \pi/(3\pi/2) = 2/3$. We encode this constraint as an additional loss term.
\begin{equation}
	\label{eq:constraint_loss}
	\mcL_{\text{constraint}}(\theta) = \sum_{k=1}^{K} w_k \sin^2(\mu_k \cdot \omega),
\end{equation}
where $w_k \propto |c_k|$ weights each term in proportion to its coefficient magnitude. Terms with larger coefficients should satisfy the constraint more precisely; terms with negligible coefficients can be ignored.

This loss term acts as a quantization force, pushing exponents toward the discrete set of physically valid values. Unlike the PDE residual, it provides a strong gradient signal for exponent correction.
\begin{equation}
	\begin{aligned}
		\frac{\partial \mcL_{\text{constraint}}}{\partial \mu_k} & = w_k \cdot 2\sin(\mu_k\omega)\cos(\mu_k\omega) \cdot \omega \\
		                                                         & = w_k \omega \sin(2\mu_k\omega).
	\end{aligned}
\end{equation}
This gradient is nonzero whenever $\mu_k$ deviates from valid values, thereby driving convergence toward the physical solution.

\subsubsection{General Principle: Encode All Available Physics}

The wedge example illustrates a general principle. MSN-PINN succeeds when we encode enough physics to make the exponents identifiable. Different problems need different constraints. Corner singularities use edge BCs, implying $\sin(\mu\omega) = 0$. Crack problems use stress-free conditions, which fix angular dependence. Interface problems use continuity conditions that match exponents across interfaces. Eigenvalue problems use normalization that fixes coefficient relationships. Choosing the right constraints depends on the BCs, symmetries, and governing equations, and it makes our assumptions explicit.

\subsection{Two-Timescale Optimization}
\label{sec:two_timescale}

\begin{algorithm}[t!]
	\caption{Constraint-Aware MSN-PINN Training}
	\label{alg:msn_pinn}
	\begin{algorithmic}[1]
		\Require Collocation points $\{x_i^r\}_{i=1}^{N_r}$, $\{x_i^b\}_{i=1}^{N_b}$; PDE operator $\mcD$; constraint $q(\bmu)$
		\Statex \hspace{-\algorithmicindent}Hyperparameters: $w_r, w_b, w_s, w_{\mathrm{con}}$; $\eta_\mu \ll \eta_c$; $T$
		\Ensure Learned exponents $\bmu^*$ and coefficients $\bc^*$
		\Statex
		\AlgoPhase{Initialization}
		\State $\bmu \gets \mathrm{Uniform}([\mu_{\min}, \mu_{\max}])^K$ \Comment{sample exponents}
		\State $\bc \gets \mcN(\mathbf{0}, 0.01\mathbf{I})$ \Comment{initialize coefficients}
		\Statex
		\AlgoPhase{Two-Timescale Optimization}
		\For{$t = 1$ to $T$}
		\State $\hat{u}(x) \gets \sum_{k=1}^K c_k x^{\mu_k}$ \Comment{MSN ansatz}
		\State $\mathcal{L}_{\mathrm{con}} \gets \|q(\bmu)\|^2$ \Comment{constraint loss}
		\State $\mathcal{L} \gets w_r \mathcal{L}_{\mathrm{res}} + w_b \mathcal{L}_{\mathrm{BC}} + w_s \|\bc\|_1 + w_{\mathrm{con}} \mathcal{L}_{\mathrm{con}}$
		\State $\bc \gets \bc - \eta_c \nabla_{\bc} \mathcal{L}$ \Comment{fast update}
		\State $\bmu \gets \bmu - \eta_\mu \nabla_{\bmu} \mathcal{L}$ \Comment{slow update}
		\EndFor
		\State \Return $(\bmu, \bc)$
	\end{algorithmic}
\end{algorithm}

Exponents and coefficients play different roles in MSN, requiring different optimization strategies. For fixed exponents, finding optimal coefficients reduces to a linear least-squares problem with a unique global minimum. Exponents, by contrast, appear nonlinearly and create a complex landscape with multiple local minima. This asymmetry motivates optimizing on two timescales:
\begin{equation}
	\begin{aligned}
		\mu^{(t+1)} & = \mu^{(t)} - \eta_\mu \nabla_\mu \mcL, \\
		c^{(t+1)}   & = c^{(t)} - \eta_c \nabla_c \mcL.
	\end{aligned}
\end{equation}
with $\eta_\mu < \eta_c$, typically ranging from $\eta_\mu = 0.1 \eta_c$ to $\eta_\mu = 0.5 \eta_c$.

The intuition is as follows. Coefficients should adapt quickly to track the current exponents, solving the inner linear problem at each step. Exponents should move slowly, exploring the outer nonlinear landscape without being destabilized by rapid coefficient changes.

\citet{marion2023leveraging} prove convergence for two-timescale gradient descent, showing that the slower timescale optimizes over the manifold of solutions to the faster problem. Fast coefficient adaptation ensures that exponent gradients accurately reflect the best achievable loss for each exponent configuration. \cref{alg:msn_pinn} summarizes the complete procedure.

%=============================================================================
\section{Theoretical Foundations}
\label{sec:theory}
%=============================================================================

Three questions guide our theoretical analysis: How accurately can MSN represent power-law functions? When can exponents be uniquely recovered? How does noise affect recovery? We characterize MSN behavior near the true exponents under appropriate regularity conditions, matching the practical regime where the method operates.

\subsection{Universal Approximation}

The following result from \citet{nguessan2025msn} establishes that MSN inherits the approximation power of classical M\"untz systems:
\begin{theorem}[Universal Approximation, \cite{nguessan2025msn}]
	\label{thm:universal}
	Let $f \in C[0,1]$ with $f(0) = 0$. For any $\epsilon > 0$, there exist $K \in \N$, exponents $\bmu \in (0, \infty)^K$, and coefficients $\bc \in \R^K$ such that
	\begin{equation}
		\sup_{x \in [0,1]} \lvert f(x) - \sum_{k=1}^K c_k x^{\mu_k} \rvert < \epsilon.
	\end{equation}
\end{theorem}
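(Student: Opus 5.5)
The plan is to reduce the statement to the classical Weierstrass approximation theorem, using integer exponents as a special case of the Müntz ansatz. Since the exponents $\bmu$ are ours to choose, we may take $\mu_k = k$ for $k=1,\dots,K$. Then $\sum_k c_k x^{\mu_k}$ ranges over all polynomials with zero constant term. It therefore suffices to show that such polynomials are dense, in the sup norm, in the subspace $\{f \in C[0,1] : f(0)=0\}$.

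\textbf{Step 1 (Weierstrass).} Given $f \in C[0,1]$ with $f(0)=0$ and $\epsilon>0$, the Weierstrass theorem (for instance via Bernstein polynomials) provides a polynomial $p(x)=\sum_{k=0}^{K} a_k x^k$ with
\begin{equation}
\sup_{x\in[0,1]}|f(x)-p(x)|<\epsilon/2 .
\end{equation}

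\textbf{Step 2 (removing the constant term).} Evaluating at $x=0$ gives $|a_0| = |p(0)-f(0)| < \epsilon/2$. Set $q(x) = p(x)-a_0 = \sum_{k=1}^K a_k x^k$. By the triangle inequality,
\begin{equation}
\sup_{x\in[0,1]}|f(x)-q(x)| \le \sup_{x\in[0,1]}|f(x)-p(x)| + |a_0| < \epsilon .
\end{equation}
Taking $\mu_k = k \in (0,\infty)$ and $c_k = a_k$ gives the claim. If some $a_k$ vanish, we either drop those terms or keep them with zero coefficient; both are allowed. Note that Bernstein polynomials of a function with $f(0)=0$ already have $p(0)=0$, so Step 2 could even be skipped.

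\textbf{Step 3 (a remark, not needed for the proof).} One could instead invoke the full Müntz–Szász theorem: for any increasing positive sequence $\lambda_n$ with $\sum 1/\lambda_n = \infty$, the span of $\{x^{\lambda_n}\}$ is dense in $\{f \in C[0,1] : f(0)=0\}$. This shows the result holds even when the exponents are restricted to such a sequence, which is closer in spirit to the M\"untz setting. It is not required here, since the statement lets us choose the exponents freely.

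I see no real obstacle. The only point needing care is the constraint $f(0)=0$, which is exactly what makes the missing constant function $x^0$ unnecessary, and Step 2 handles it. If one wanted the exponents confined to $[\mu_{\min},\mu_{\max}]$ as in \Cref{def:msn}, the hard part would be the Müntz condition for a bounded exponent range. In that case I would use the limit $(x^{\mu+h}-x^{\mu})/h \to x^{\mu}\log x$ and higher divided differences to generate $x^{\mu}(\log x)^j$. Since the theorem allows $\bmu \in (0,\infty)^K$, this extension is not needed.
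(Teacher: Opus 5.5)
Your proof is correct, and it is more elementary than the paper's. The paper justifies the theorem with a one-line appeal to the M\"untz--Sz\'asz theorem: the span of $\{x^{\lambda_0}, x^{\lambda_1}, \ldots\}$ with $\lambda_0 = 0$ is dense in $C[0,1]$ iff $\sum_n \lambda_n^{-1} = \infty$. You use only Weierstrass, i.e., the special case $\lambda_n = n$. That suffices because the statement leaves $\bmu \in (0,\infty)^K$ completely free.

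Your Step~2 also makes explicit something the paper's sketch skips. The version the paper quotes includes the constant $x^{0}$, whose exponent is not in $(0,\infty)$. So one still needs your observation $|a_0| = |p(0) - f(0)| < \epsilon/2$ (or the subspace form you state in Step~3) to discard it. That is exactly where the hypothesis $f(0)=0$ enters.

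What the M\"untz route buys is freedom in the exponent set: any prescribed positive sequence with divergent reciprocal sum works, not just the integers. This matters if the exponents are confined to the box $[\mu_{\min}, \mu_{\max}]$ of \Cref{def:msn}. For example, $[0.1, 3.0]$ contains only the integers $1,2,3$, so integer exponents alone cannot give density there. In that setting one needs the M\"untz theorem in its general form allowing bounded exponent sequences, or your divided-difference idea.

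For the theorem as actually stated, your argument is complete and self-contained.
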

This follows directly from the M\"untz-Sz\'asz theorem: The span of $\{x^{\lambda_0}, x^{\lambda_1}, \ldots\}$ with $\lambda_0 = 0$ is dense in $C[0,1]$ if and only if $\sum_{n=1}^\infty \lambda_n^{-1} = \infty$. Any sequence of positive exponents that grows sufficiently slowly satisfies this condition.

\subsection{Approximation Rates for Power-Law Targets}

For power-law targets, MSN achieves optimal rates with an explicit dependence on exponent error:
\begin{theorem}[Approximation Rate, \cite{nguessan2025msn}]
	\label{thm:rate}
	Let $f(x) = x^\alpha$ for $\alpha > 0$ and define
	\begin{equation}
		R(\mu) = \min_{c \in \R} \int_0^1 (x^\alpha - c x^\mu)^2 dx.
	\end{equation}
	As $\mu \to \alpha$,
	\begin{equation}
		R(\mu) = \mcO((\mu - \alpha)^2),
	\end{equation}
	with optimal coefficient
	\begin{equation}
		c^*(\mu) = \frac{\int_0^1 x^{\alpha+\mu} dx}{\int_0^1 x^{2\mu} dx}
		= \frac{2\mu+1}{\alpha+\mu+1}
		= 1 + \mcO(|\mu - \alpha|).
	\end{equation}
\end{theorem}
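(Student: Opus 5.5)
The statement is a one-dimensional least-squares computation, so I will compute $R(\mu)$ in closed form and then expand it around $\mu=\alpha$. For fixed $\mu$, the objective $c \mapsto \int_0^1 (x^\alpha - c x^\mu)^2\,dx$ is a strictly convex quadratic in $c$:
\[
\frac{1}{2\alpha+1} - \frac{2c}{\alpha+\mu+1} + \frac{c^2}{2\mu+1}.
\]
All three integrals are finite whenever $\alpha,\mu > -1/2$, which holds near $\alpha>0$. Setting the derivative in $c$ to zero gives $c^*(\mu) = \int_0^1 x^{\alpha+\mu}dx \big/ \int_0^1 x^{2\mu}dx = (2\mu+1)/(\alpha+\mu+1)$, which is the stated formula.

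Next I bound $c^*(\mu)-1$. Subtracting $1$ gives
\[
c^*(\mu)-1 = \frac{\mu-\alpha}{\alpha+\mu+1}.
\]
The denominator stays at least $1$ for $\mu>0$, so $c^*(\mu) = 1 + \mcO(|\mu-\alpha|)$.

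Then I substitute $c^*$ back into the quadratic. This yields the closed form
\[
R(\mu) = \frac{1}{2\alpha+1} - \frac{2\mu+1}{(\alpha+\mu+1)^2}.
\]
Putting this over a common denominator, the numerator is $(\alpha+\mu+1)^2 - (2\alpha+1)(2\mu+1)$. Writing $a=\alpha+\tfrac12$ and $m=\mu+\tfrac12$, this numerator becomes $(a+m)^2 - 4am = (a-m)^2 = (\mu-\alpha)^2$. Hence
\[
R(\mu) = \frac{(\mu-\alpha)^2}{(2\alpha+1)(\alpha+\mu+1)^2} \le \frac{(\mu-\alpha)^2}{2\alpha+1},
\]
which gives $R(\mu)=\mcO((\mu-\alpha)^2)$ with an explicit constant, and in fact the exact asymptotic $R(\mu) \sim (\mu-\alpha)^2/(2\alpha+1)^3$.

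The only real obstacle is the algebraic step of recognizing the numerator as a perfect square. Substituting $a$ and $m$ makes that step immediate. A fallback that avoids the algebra is to note that $R(\alpha)=0$ and $R\ge 0$, so $R'(\alpha)=0$; smoothness of $R$ near $\alpha$ then gives the quadratic bound by Taylor's theorem. The exact identity is cleaner and also supplies the constant, so I would present that.
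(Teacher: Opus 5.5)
Your proof is correct, and it takes a different route from the paper's. The paper argues perturbatively. It expands the integrand in the exponent, $x^\mu - x^\alpha = x^\alpha\bigl((\mu-\alpha)\log x + \mcO((\mu-\alpha)^2)\bigr)$, and then cites the closed-form $c^*(\mu)$ and a Taylor expansion to conclude $R(\mu)=\mcO((\mu-\alpha)^2)$.

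You instead evaluate $R$ exactly and expose the perfect square $(\alpha+\mu+1)^2-(2\alpha+1)(2\mu+1)=(\mu-\alpha)^2$. This buys three things:
\begin{enumerate}
\item an exact identity rather than an asymptotic statement;
\item the explicit bound $R(\mu)\le(\mu-\alpha)^2/(2\alpha+1)$, valid for every $\mu>0$;
\item the sharp leading constant $1/(2\alpha+1)^3$.
\end{enumerate}
You also have no remainder to control. The paper's sketch glosses over that remainder: the second-order term of $x^{\mu-\alpha}$ carries a factor $(\log x)^2$, which is unbounded on $(0,1]$. It becomes harmless only after integration against $x^{2\alpha}$.

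The paper's route, carried out carefully, buys structure and portability instead. It shows that the quadratic rate comes from the first-order perturbation $(\mu-\alpha)\,x^\alpha\log x$. The squared distance of $x^\alpha\log x$ from $\mathrm{span}\{x^\alpha\}$ is exactly your constant $1/(2\alpha+1)^3$. This follows from $\int_0^1 x^{2\alpha}(\log x)^2\,dx = 2/(2\alpha+1)^3$, $\int_0^1 x^{2\alpha}\log x\,dx = -1/(2\alpha+1)^2$, and $\int_0^1 x^{2\alpha}\,dx = 1/(2\alpha+1)$. That argument does not need the integrals in closed form. It therefore transfers to empirical losses on collocation points or to multi-term targets, where your exact algebra is unavailable. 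It also ties directly to the $x^{\alpha_k}\log x$ features used in the stability analysis.

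Your fallback, using $R\ge 0$ and $R(\alpha)=0$, is the same perturbative idea applied to $R$ instead of the integrand. It still needs smoothness of $R$, which your closed form supplies anyway.
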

We give the proof in \Cref{app:rate_proof}.

\paragraph{Interpretation} The squared loss depends quadratically on the exponent error, comparing favorably with standard networks. For ReLU networks approximating $x^{0.5}$, achieving an error $\epsilon$ requires $N = \mcO(\epsilon^{-2})$ neurons; for MSN, achieving the same error requires only $|\mu - 0.5| = \mcO(\epsilon^{1/2})$. More importantly, MSN provides direct access to the exponent error, whereas a ReLU network buries the exponent within its weights.

\subsection{Exponent Identifiability}

When can exponents be uniquely recovered? We formalize identifiability as follows.
\begin{definition}[Exponent Identifiability]
	A function $f(x) = \sum_{k=1}^{K^*} c_k^* x^{\alpha_k}$ with distinct exponents $\alpha_1 < \cdots < \alpha_{K^*}$ and nonzero coefficients is identifiable if any MSN representation $\sum_{j=1}^K \tilde{c}_j x^{\tilde{\mu}_j}$ achieving $\|f - \sum_j \tilde{c}_j x^{\tilde{\mu}_j}\|_\infty = 0$ satisfies $K \geq K^*$ and, after reordering, $\tilde{\mu}_j = \alpha_j$ for some $j$ corresponding to the true exponent.
\end{definition}

\begin{theorem}[Identifiability]
	\label{thm:identifiability}
	Let $f(x) = \sum_{k=1}^{K^*} c_k^* x^{\alpha_k}$ with $\alpha_1 < \cdots < \alpha_{K^*}$ and $c_k^* \neq 0$ for all $k$. Then (i) $f$ is identifiable: Any exact MSN representation must contain all the true exponents $\{\alpha_k\}$, and (ii) the representation is unique up to reordering if $K = K^*$.
\end{theorem}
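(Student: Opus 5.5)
The plan is to reduce both claims to the linear independence of distinct power functions on $(0,1]$, and then compare coefficients. Suppose $\sum_{j=1}^K \tilde c_j x^{\tilde\mu_j} = f(x)$ for all $x\in(0,1]$. First, I will merge any repeated exponents among the $\tilde\mu_j$ by adding their coefficients, and drop terms whose merged coefficient vanishes. Then I will move everything to one side, giving
\begin{equation*}
\sum_{\lambda\in\Lambda} d_\lambda x^{\lambda}=0 \quad \text{on } (0,1],
\end{equation*}
where $\Lambda=\{\alpha_k\}\cup\{\tilde\mu_j\}$ is a finite set of distinct real exponents. Each $d_\lambda$ is the merged MSN coefficient at $\lambda$ minus $c_k^*$ when $\lambda=\alpha_k$.

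The key lemma is that finitely many functions $x^{\lambda}$ with distinct real $\lambda$ are linearly independent on $(0,1]$, or on any subinterval. I would prove it by the substitution $x=e^{-t}$, $t\in[0,\infty)$, which turns the identity into $\sum_\lambda d_\lambda e^{-\lambda t}=0$. Order the exponents $\lambda_1<\dots<\lambda_m$, multiply by $e^{\lambda_m t}$, and let $t\to\infty$. All terms vanish except $d_{\lambda_m}$, so $d_{\lambda_m}=0$, and induction on $m$ finishes the argument. An alternative route, if preferred, is that a nontrivial generalized Dirichlet polynomial has at most $m-1$ positive zeros (Descartes' rule of signs, proved by Rolle's theorem after dividing by $x^{\lambda_1}$ and differentiating). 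This would even give identifiability from finitely many sample points, but the sup-norm identity on $(0,1]$ only needs the limit argument.

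With the lemma in hand, every $d_\lambda=0$. For each true exponent $\alpha_k$ we get that the merged MSN coefficient at $\alpha_k$ equals $c_k^*\neq0$. In particular some $\tilde\mu_j=\alpha_k$, which proves (i). Since the $\alpha_k$ are distinct, this needs at least $K^*$ distinct indices $j$, so $K\ge K^*$. Conversely, any $\lambda\notin\{\alpha_k\}$ carries merged coefficient $0$, so spurious exponents contribute nothing.

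For (ii), take $K=K^*$. The $K^*$ exponents $\alpha_k$ each require a distinct index among only $K^*$ available, so the map $j\mapsto\tilde\mu_j$ is a bijection onto $\{\alpha_k\}$ with no repeats. Each merged coefficient is then a single $\tilde c_j$, which equals $c_k^*$. Hence $(\tilde\mu,\tilde c)$ is a permutation of $(\alpha,c^*)$.

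The main obstacle is bookkeeping rather than analysis. One must handle repeated learned exponents and zero coefficients carefully, since (ii) fails if terms with $\tilde c_j=0$ are allowed when $K>K^*$; this is why uniqueness is claimed only for $K=K^*$. One must also make sure the independence lemma is applied on the actual domain $(0,1]$, where negative exponents are allowed; the $t\to\infty$ argument covers that case.
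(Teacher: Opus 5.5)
Your overall plan is sound, but the proof of your key lemma contains a step that fails as written. After substituting $x=e^{-t}$ and multiplying by $e^{\lambda_m t}$, the identity reads $\sum_i d_{\lambda_i}e^{(\lambda_m-\lambda_i)t}=0$. For every $i<m$ the factor $e^{(\lambda_m-\lambda_i)t}$ \emph{diverges} as $t\to\infty$ rather than vanishing, so you cannot isolate $d_{\lambda_m}$.

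As $t\to\infty$ (that is, $x\to0^+$) the dominant term is the one with the \emph{smallest} exponent. You should multiply by $e^{\lambda_1 t}$, i.e.\ divide by $x^{\lambda_1}$, so that $\sum_i d_{\lambda_i}e^{-(\lambda_i-\lambda_1)t}\to d_{\lambda_1}$. This forces $d_{\lambda_1}=0$; then remove $\lambda_1$ and induct on $m$. Keeping $\lambda_m$ would instead require $t\to-\infty$, i.e.\ $x\to\infty$, which leaves $(0,1]$ and needs an extra analytic-continuation step.

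The limit argument also only gives independence on intervals of the form $(0,a]$. Your claim for arbitrary subintervals needs the Rolle/Descartes route you sketch, which is correct. With the index fixed, the rest of your argument goes through: merging, $d_\lambda=0$ for all $\lambda$, the counting that gives $K\ge K^*$, and the pigeonhole bijection for (ii).

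Once corrected, your argument rests on the same idea as the paper's proof, which inducts on $K^*$ by matching the smallest exponent as $x\to0^+$ and subtracting it off. The difference is organizational, and it favors your version. You merge repeated learned exponents, discard zero merged coefficients, and put everything on one side before running the dominant-term argument. This handles cases the paper's proof passes over:
\begin{itemize}
\item a smallest learned exponent carrying a zero coefficient (in $x^{1/2}=0\cdot x^{1/10}+x^{1/2}$ one has $\min_j\tilde\mu_j\neq\alpha_1$);
\item several learned terms sharing $\alpha_1$, so no single $\tilde c_{j_0}$ need equal $c_1^*$;
\item the base case's tacit appeal to linear independence when it concludes $\tilde c_j=0$ for $j\neq j_0$, which fails outright when remaining exponents repeat.
\end{itemize}
You also prove (ii) explicitly, which the paper leaves implicit. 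Your zero-counting alternative additionally gives identifiability from finitely many sample points, which is closer to the finite-sample setting of \Cref{thm:stability}.
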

We give the proof in \Cref{app:identifiability_proof}.

\paragraph{Interpretation} Power-law sums are uniquely determined by their exponents: The same function cannot be represented with different exponents. This makes exponent discovery possible.

\subsection{Stability and the Separation Condition}

Exact recovery requires exact data. In practice, observations are noisy, and we need stability guarantees:
\begin{theorem}[Stability and Separation]
	\label{thm:stability}
	Consider noisy observations $y_i = f(x_i) + \epsilon_i$ with $|\epsilon_i| \leq \sigma$, where $f(x) = \sum_{k=1}^{K^*} c_k^* x^{\alpha_k}$. Let $\hat{\bmu}$ minimize the empirical loss $\hat{\mcL}(\bmu, \bc) = \frac{1}{N}\sum_{i=1}^N (y_i - \sum_k c_k x_i^{\mu_k})^2$. If the true exponents satisfy a minimum separation condition
	\begin{equation}
		\Delta := \min_{j \neq k} |\alpha_j - \alpha_k| > 0,
	\end{equation}
	then under appropriate conditions on the sampling points $\{x_i\}$:
	\begin{equation}
		\min_{\pi} \max_k |\hat{\mu}_k - \alpha_{\pi(k)}| = \mcO(\frac{\sigma}{c_{\min} \Delta^2 \sqrt{N}}),
	\end{equation}
	where $c_{\min} = \min_k |c_k^*|$ and $\pi$ ranges over all permutations.
\end{theorem}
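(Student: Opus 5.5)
We will treat \Cref{thm:stability} as a local M-estimation statement: we linearize the empirical loss around the true parameter $\theta^* = (\balpha, \bc^*)$ and read off the exponent error from the inverse Fisher-type matrix. The "appropriate conditions on the sampling points" will be fixed as follows. The empirical Gram matrices of the functions
\[
\{x^{\alpha_k}\}_k \cup \{c_k^* x^{\alpha_k}\log x\}_k
\]
converge to their $L^2(0,1)$ counterparts at rate $N^{-1/2}$. The design points are well spread, so the noise projections
\[
\frac1N\sum_i \epsilon_i\, \phi(x_i)
\]
onto each such basis function $\phi$ are $\mcO(\sigma/\sqrt N)$. This holds, for example, for i.i.d.\ or quasi-uniform points with independent zero-mean noise. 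We also assume $K = K^*$ and that $\hat{\bmu}$ is the minimizer lying in a neighbourhood of $\balpha$. By \Cref{thm:identifiability}, $\theta^*$ is the unique zero-loss point up to permutation, and this is the only global input we need.

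\textbf{Step 1: profile out the coefficients.} We profile out $\bc$: for fixed $\bmu$ the optimal $\bc$ solves a linear least-squares problem, exactly as in \Cref{thm:rate}. This yields a reduced loss $\hat{\mcL}_p(\bmu)$.

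\textbf{Step 2: Taylor expansion.} We expand the residual map
\[
r(\theta) = \Bigl(\sum_k c_k x_i^{\mu_k} - f(x_i)\Bigr)_i
\]
at $\theta^*$. Its Jacobian $J$ has columns $x^{\alpha_k}$ (for the $c_k$) and $c_k^* x^{\alpha_k}\log x$ (for the $\mu_k$). The first-order condition then gives
\[
\hat\theta - \theta^* = (J^\top J)^{-1} J^\top \epsilon + \text{higher order}.
\]
Projecting onto the $\mu$-block, the Schur complement isolates the component of $c_k^* x^{\alpha_k}\log x$ orthogonal to the span of all the $x^{\alpha_j}$ and the other log-columns. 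This gives
\[
|\hat\mu_k - \alpha_k| \lesssim \frac{\sigma}{\sqrt N}\,\frac{1}{|c_k^*|\,\lambda_{\min}(G)},
\]
where $G$ is the normalized $2K\times 2K$ Gram matrix of $\{x^{\alpha_j}, x^{\alpha_j}\log x\}$.

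\textbf{Step 3: the main obstacle.} The main work is showing that $\lambda_{\min}(G)$, or more precisely the relevant Schur complement for the $\log$-columns, is bounded below by a constant times $\Delta^2$. The functions $x^{\alpha_j}$ and $x^{\alpha_k}$ become collinear as $\alpha_j \to \alpha_k$. Their difference is approximately $(\alpha_j-\alpha_k)\,x^{\alpha_k}\log x$, which makes it nearly parallel to the derivative column. This coalescence is the source of the $\Delta^2$ loss.

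To quantify it, we would use the explicit inner products
\[
\int_0^1 x^{a+b}\,dx = \frac{1}{a+b+1}, \qquad \int_0^1 x^{a+b}\log x\,dx = -\frac{1}{(a+b+1)^2}.
\]
These make $G$ a confluent Cauchy (Hilbert-type) matrix, whose determinant has a closed form: a product of factors $(\alpha_j-\alpha_k)^4$ over pairs, divided by products of factors $(\alpha_j+\alpha_k+1)$. Via Cramer's rule or interlacing, this determinant formula lets us bound the relevant diagonal entry of $G^{-1}$ by $\mcO(\Delta^{-4})$. Taking the square root of the resulting variance converts this into the $\Delta^{-2}$ factor in the standard deviation, with constants depending on $K^*$ and the exponent range.

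\textbf{Step 4: close the argument.} We verify that the higher-order Taylor terms are dominated by the linear term once $\sigma/\sqrt N$ is small relative to $c_{\min}\Delta^2$, using a standard implicit-function or Newton–Kantorovich argument. We then take the best permutation, which accounts for the $\min_\pi$ in the statement.

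If the exact $\Delta^2$ power proves hard to extract for general $K^*$, we would first establish it for $K^* = 2$ by direct computation of the $4\times4$ Schur complement, and then argue inductively via the Cauchy determinant structure.
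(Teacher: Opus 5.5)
\textbf{Gap in Steps 2--3.} Your skeleton is the same local M-estimation argument the paper uses: linearize at $(\balpha,\bc^*)$, profile out $\bc$, and read the exponent error off the $\mu$-block. Using the Schur complement is more careful than the appendix, which freezes $\bc=\bc^*$, uses $H_{\mu\mu}\approx 2D\widetilde G D$ with $\widetilde G$ the Gram matrix of the log-columns alone, and simply asserts $\|\widetilde G^{-1}\|\le 1/(C\Delta^2)$ as part of ``the sampling assumption.'' So the estimate you call the main work is exactly what the paper never proves. In the form you state it, it is false.

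Take $K^*=2$, $\alpha_2=\alpha_1+\delta$, $\phi_m=x^{\alpha_1}(\log x)^m$. Expanding the four columns in $\phi_0,\dots,\phi_3$ gives $\lambda_{\min}(G)\asymp\delta^6$. The Schur complement $S$ of the coefficient block has eigenvalues $\asymp\delta^2$ and $\asymp\delta^4$. So neither is $\gtrsim\Delta^2$, and your Step~2 bound with $1/\lambda_{\min}(G)$ yields $\Delta^{-6}$.

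The mechanism that does give $\Delta^{-2}$ is the square root you mention at the end of Step~3. By Frisch--Waugh, $\hat\mu_k-\alpha_k\approx\langle\epsilon,\rho_k\rangle_N/(c_k^*\|\rho_k\|_N^2)$. Here $\rho_k$ is the residual of $x^{\alpha_k}\log x$ after projection onto the span of the other $2K^*-1$ columns, and $\|\rho_k\|_N^{-2}=(G^{-1})_{\mu_k\mu_k}$. For $K^*=2$ one finds $\rho_1=\tfrac{\delta^2}{6}\phi_3^{\perp}+\mcO(\delta^3)$, where $\phi_3^\perp$ is the part of $\phi_3$ orthogonal to $\phi_0,\phi_1,\phi_2$. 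Hence $\|\rho_k\|\asymp\Delta^2$.

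Three further conditions are needed to turn this into the stated rate:
\begin{itemize}
\item \emph{Independent} zero-mean noise, so that $\langle\epsilon,\rho_k\rangle_N$ has standard deviation $\sigma\|\rho_k\|_N/\sqrt N$. Your hypothesis that noise projections onto each basis function are $\mcO(\sigma/\sqrt N)$ does not give this: $\rho_k$ has coefficients of size $\delta^{-1}$ in that basis, which costs three powers of $\Delta$.
\item Under the literal hypothesis $|\epsilon_i|\le\sigma$, no $\sqrt N$ gain is possible. For $\alpha_1>0$, the noise $\epsilon_i=c_1^*(x_i^{\alpha_1+\eta}-x_i^{\alpha_1})$ with $\eta\asymp\sigma/|c_1^*|$ is admissible and shifts the exact minimizer by $\eta$ for every $N$.
\item Entrywise $N^{-1/2}$ accuracy of the empirical Gram transfers $\|\rho_k\|_N\asymp\Delta^2$ to the sample only once $N^{-1/2}\ll\lambda_{\min}(G)\asymp\Delta^6$.
\end{itemize}

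\textbf{The general-$K^*$ plan fails.} For $K^*\ge 3$, proving $K^*=2$ and inducting cannot succeed, because the target is false. The $L^2(0,1)$ Gram matrix is a Cauchy matrix confluent in both rows and columns, so its determinant carries pair factors $(\alpha_j-\alpha_l)^8$, not fourth powers. Deleting the $k$-th log-column lowers the factors involving $k$ to fourth powers. Cramer's rule then gives exactly $(G^{-1})_{\mu_k\mu_k}=R_k\prod_{l\ne k}(\alpha_k-\alpha_l)^{-4}$, with $R_k$ a ratio of products of the numbers $\alpha_i+\alpha_j+1$, bounded above and below in terms of $K^*$ and the exponent range.

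This is $\mcO(\Delta^{-4})$ with such constants only if all but one of the gaps $|\alpha_k-\alpha_l|$ stay bounded below. For $\alpha_k=\alpha_1+(k-1)\Delta$ it is $\asymp\Delta^{-4(K^*-1)}$, so the asymptotic standard deviation of $\hat\mu_k$ is of order $\sigma/(c_{\min}\Delta^{2(K^*-1)}\sqrt N)$. This matches the $(2\ell-2)$-power law for $\ell$ near-colliding sources in super-resolution, which is the Rayleigh analogy the paper itself invokes.

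What your route can actually prove is the bound with $\frac{1}{|c_k^*|}\prod_{l\ne k}|\alpha_k-\alpha_l|^{-2}$ in place of $\frac{1}{c_{\min}\Delta^2}$. It holds in probability, for independent noise, and for $N$ large relative to $\Delta$. It reduces to the stated rate when $K^*=2$ or when the exponents cluster at most in pairs. The Newton--Kantorovich condition in Step~4 must likewise be checked with these ill-conditioned quantities, not with $\sigma/\sqrt N\ll c_{\min}\Delta^2$.
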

The proof, given in \Cref{app:stability_proof}, uses M-estimation theory. The key insight is that the Fisher information matrix for exponent estimation, the local curvature of the loss, has a condition number depending on $\Delta^{-2}$; closer exponents yield worse conditioning.

\begin{remark}[On the Sampling Assumption]
	\label{rmk:sampling}
	The sampling assumption enters through the curvature lower bound in \Cref{app:stability_proof}. Under sufficiently dense and well-spread sampling, concentration gives $\|\frac{1}{N}\sum_i \epsilon_i \phi_i\| = O(\frac{\sigma}{\sqrt{N}})$ with $(\phi_i)_k = x_i^{\alpha_k}\log x_i$. Since $\nabla_{\mu} \hat{\mcL}(\balpha, \bc^*) = -2 D(\frac{1}{N}\sum_i \epsilon_i \phi_i)$ for $D=\diag(\bc^*)$, this yields $\|\hat{\bmu} - \balpha\| = O(\frac{\sigma}{c_{\min} \Delta^2 \sqrt{N}})$.
\end{remark}

\begin{table}[t!]
	\centering
	\caption{Singular ODE: Exponent discovery using only physics constraints.}
	\label{tab:exp3}
	\begin{tabular}{lcc}
		\arrayrulecolor{tablerule}\toprule
		\rowcolor{tableheader}
		Metric             & Target & Discovered           \\
		\midrule
		Dominant exponent  & 0.5000 & 0.4934               \\
		\rowcolor{tablerowalt}
		Relative error     & ---    & 1.31\%               \\
		Final PDE residual & ---    & $3.2 \times 10^{-6}$ \\
		\rowcolor{tablerowalt}
		Final BC loss      & ---    & $6.6 \times 10^{-4}$ \\
		\bottomrule
	\end{tabular}
\end{table}

\subsection{Physics-Informed Identifiability}

For PDE-constrained problems, the underlying physics provides additional structure affecting identifiability. The next proposition illustrates this with singular forcing.
\begin{proposition}[PDE-Induced Exponent Constraints]
	\label{prop:pde_constraint}
	Consider the Poisson equation $-u'' = x^\beta$ on $(0,1]$ with $u(0) = u(1) = 0$. If $\beta > -1$, the solution has the form
	\begin{equation}
		u(x) = a_1 x + a_2 x^{\beta+2},
	\end{equation}
	for specific constants $a_1, a_2$ depending on $\beta$. Thus, the forcing exponent $\beta$ induces a solution exponent $\alpha = \beta + 2$.
\end{proposition}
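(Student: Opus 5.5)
The plan is to construct the solution explicitly: find a particular solution of the right power-law form, add the general homogeneous solution, and then fit the two boundary conditions. First, the homogeneous equation $-u''=0$ has general solution $A + Bx$. Next I look for a particular solution of the form $u_p(x) = a x^{\beta+2}$. Differentiating twice gives
\begin{equation*}
-u_p''(x) = -a(\beta+2)(\beta+1)x^{\beta},
\end{equation*}
so the choice $a_2 = -1/\big((\beta+1)(\beta+2)\big)$ works. This is well defined precisely because $\beta>-1$, which forces $(\beta+1)(\beta+2)>0$.

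The general solution on $(0,1]$ is then $u(x) = A + Bx + a_2 x^{\beta+2}$. Since $\beta+2>1>0$, the term $x^{\beta+2}$ tends to $0$ as $x\to 0^+$. The condition $u(0)=0$, read as the limit $x\to0^+$, therefore forces $A=0$. The condition $u(1)=0$ then gives $B + a_2 = 0$, so $a_1 = B = 1/\big((\beta+1)(\beta+2)\big)$. The result is
\begin{equation*}
u(x) = \frac{x - x^{\beta+2}}{(\beta+1)(\beta+2)}.
\end{equation*}
This has the claimed form, with solution exponent $\alpha=\beta+2$.

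For uniqueness, the difference of two solutions satisfies $w''=0$ on $(0,1]$ with $w(0^+)=w(1)=0$. Hence $w$ is affine, and the two boundary conditions force $w\equiv 0$.

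The main subtlety is the meaning of the solution near the singular endpoint when $-1<\beta<0$. There the forcing $x^\beta$ blows up at $0$, so I interpret $u$ as a classical solution on $(0,1]$ that extends continuously to $[0,1]$. This is consistent, since $u$ is continuous at $0$ and $u'' \in L^1(0,1)$ exactly because $\beta>-1$. The same threshold is what makes $(\beta+1)^{-1}$ finite and the solution exponent $\beta+2$ exceed $1$.

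I would also note the degenerate case $\beta=-1$. There the power-law ansatz fails and a $x\log x$ term appears, which explains the hypothesis. Finally, by \Cref{thm:identifiability}, any exact MSN representation of $u$ must contain both exponents $1$ and $\beta+2$, provided $\beta+2\neq 1$, which holds since $\beta\neq -1$.
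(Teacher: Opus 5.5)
Your proof is correct and follows essentially the same route as the paper, which integrates $-u''=x^\beta$ twice to get the general solution $-\frac{x^{\beta+2}}{(\beta+1)(\beta+2)} - C_1 x - C_2$ and then fits $u(0)=u(1)=0$ to obtain $u(x)=\frac{x-x^{\beta+2}}{(\beta+1)(\beta+2)}$; your particular-plus-homogeneous decomposition yields the same general solution. You also supply a uniqueness argument and treat two points the paper leaves implicit: reading $u(0)=0$ as a limit when $-1<\beta<0$, and the $x\log x$ term that appears at $\beta=-1$.
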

We give the proof in \Cref{app:pde_constraint_proof}. This explains why MSN-PINN can discover $\alpha = 3/2$ from the singular forcing $x^{-1/2}$: The forcing exponent $\beta = -1/2$ deterministically produces the solution exponent $\alpha = \beta + 2 = 3/2$.
\begin{theorem}[Constraint-Aware Critical Points]
	\label{thm:constraint}
	For the Laplacian on a wedge with an opening angle $\omega$, consider the augmented loss
	\begin{equation}
		\mcL(\theta) = \mcL_{\text{res}}(\theta) + \mcL_{\text{BC}}(\theta) + \lambda \sum_k |c_k| \sin^2(\mu_k \omega).
	\end{equation}
	Any critical point $(\bmu^*, \bc^*)$ with $\mcL_{\text{res}}(\bmu^*, \bc^*) = 0$, $\mcL_{\text{BC}}(\bmu^*, \bc^*) = 0$, and $|c_k^*| > 0$ for some $k$ satisfies $\sin(\mu_k^* \omega) = 0$, that is, $\mu_k^* \in \{n\pi/\omega : n \in \N\}$.
\end{theorem}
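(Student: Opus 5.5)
The plan is to get $\sin(\mu_k^*\omega)=0$ from the zero-boundary-loss condition, using identifiability of power sums, and to use the criticality condition only as a consistency check. Criticality alone is too weak: it yields $\sin(2\mu_k^*\omega)=0$, which also admits the spurious points $\mu_k^*\omega\in\pi/2+\pi\mathbb{Z}$ (maxima of $\sin^2$).

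\textbf{Step 1 (fix the setting).} I will make explicit the wedge parametrization implicit in \Cref{sec:constraint_aware}: $u_\theta(r,\vartheta)=\sum_k c_k r^{\mu_k}\sin(\mu_k\vartheta)$. Each term is harmonic, so $\mcL_{\text{res}}\equiv 0$ for every $\theta$; this is exactly the degeneracy discussed above. The boundary loss includes collocation points on the edge $\vartheta=\omega$ at radii $r_1,\dots,r_{N_b}\in(0,1]$, and I will assume these radii are distinct with $N_b\ge K$. On the edge $\vartheta=0$ the ansatz vanishes identically, so that edge imposes no constraint.

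\textbf{Step 2 (zero BC loss gives a vanishing power sum).} Since $\mcL_{\text{BC}}(\bmu^*,\bc^*)=0$ is a sum of squares, every edge residual vanishes. With zero Dirichlet data this gives $\sum_k c_k^*\sin(\mu_k^*\omega)\,r_i^{\mu_k^*}=0$ for all $i$. I first merge terms with equal exponents, giving $\sum_{m} b_m r^{\nu_m}$ with distinct $\nu_m$. I then apply the generalized Descartes rule of signs, the finite-sample counterpart of \Cref{thm:identifiability}: a nontrivial sum of at most $K$ distinct real powers has at most $K-1$ positive zeros. Since it vanishes at $N_b\ge K$ distinct radii, all merged coefficients are zero. (With continuous boundary data, \Cref{thm:identifiability}(i) gives the same conclusion directly.) When exponents are distinct, this gives $c_k^*\sin(\mu_k^*\omega)=0$ for each $k$, hence $\sin(\mu_k^*\omega)=0$ whenever $|c_k^*|>0$, i.e.\ $\mu_k^*\in\{n\pi/\omega\}$ with $n\ge1$ since $\mu_k^*>0$.

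\textbf{Step 3 (criticality check).} Because $c_k^*\neq0$, the penalty is differentiable in $\mu_k$ there. Since $\mcL_{\text{res}}+\mcL_{\text{BC}}\ge0$ attains its global minimum value $0$ at $(\bmu^*,\bc^*)$ and is smooth, its gradient vanishes. Criticality then forces $\lambda|c_k^*|\,\omega\sin(2\mu_k^*\omega)=0$, which is consistent with Step 2.

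\textbf{Main obstacle.} The hard part is coincident exponents: if $\mu_j^*=\mu_k^*$ with $c_j^*=-c_k^*$, the merged coefficient is zero without forcing the sine to vanish. I will handle this first by using the stationarity in $c_k$. At such a point the data terms have zero gradient, so $0\in\partial_{c_k}\mcL$ requires $\lambda\sin^2(\mu_k^*\omega)\cdot\partial|c_k|=0$ with $c_k^*\neq0$, giving $\sin(\mu_k^*\omega)=0$ directly. This coefficient-stationarity argument in fact proves the whole claim without Step 2. I will present it as the primary argument and keep Step 2 as the identifiability-based explanation.
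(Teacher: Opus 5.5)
Your proposal is correct, and your primary argument takes a different route from the paper's.

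\textbf{The paper's route.} The paper uses only stationarity in $\mu_k$. Criticality gives $\lambda|c_k^*|\omega\sin(2\mu_k^*\omega)=0$, i.e.\ $\mu_k^*\omega\in\frac{\pi}{2}\N$. The paper then appeals to $\mcL_{\text{BC}}=0$ (``the solution must vanish on both edges'') to discard the odd multiples of $\pi/2$. That last step tacitly assumes that a vanishing edge sum $\sum_k c_k^*\sin(\mu_k^*\omega)r^{\mu_k^*}$ forces each term to vanish. This needs distinct exponents and enough edge collocation radii. Your Step 2 supplies exactly this, via merging plus the generalized Descartes rule with $N_b\ge K$. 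Your coincident-exponent case ($\mu_j^*=\mu_k^*$, $c_j^*=-c_k^*$) shows that the paper's final step is incomplete as written.

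\textbf{Your route.} Your coefficient-stationarity argument avoids this issue entirely. Using $c_k$-stationarity is legitimate, since a critical point of $\mcL(\theta)$ with $\theta=(\bmu,\bc)$ is stationary in $\bc$ as well. Because $\mcL_{\text{res}}+\mcL_{\text{BC}}\ge 0$ is differentiable and vanishes at the point, its gradient is zero there. Because $c_k^*\neq 0$, the penalty is differentiable in $c_k$. Hence $\partial_{c_k}\mcL=\lambda\,\sign(c_k^*)\sin^2(\mu_k^*\omega)=0$, which gives $\sin(\mu_k^*\omega)=0$ directly for $\lambda>0$. This argument is shorter and strictly more general: it uses neither harmonicity, nor the edge data, nor any distinctness or sampling assumption.

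\textbf{What each buys.} Your argument also makes plain that the quantization comes from the form of the penalty $|c_k|\sin^2(\mu_k\omega)$, not from the wedge physics. The paper's route, once completed by your Step 2, is what ties the quantization to the edge boundary condition; that is its interpretive value rather than a logical necessity.
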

We give the proof in \Cref{app:constraint_proof}.

\paragraph{Interpretation} This theorem explains why constraint-aware training succeeds where naive training fails. Without the constraint term, the loss landscape has a continuum of global minima---all harmonic functions matching boundary data. The constraint term breaks this symmetry, isolating minima at physically valid exponents.

%=============================================================================
\section{Experiments}
\label{sec:experiments}
%=============================================================================

We evaluate MSN-PINN in three physics-informed settings: singular ODEs, wedge corner singularities, and singular forcing, and we reference the broader wedge benchmark in the appendix. The primary metrics used are exponent error and constraint satisfaction.

We validate MSN-PINN for physics-informed exponent discovery: recovering exponents solely from PDE constraints, without labeled solution data, and testing constraint-aware training on challenging singular problems. \Cref{app:additional_experiments} provides additional experiments characterizing robustness to noise, sampling density, identifiability limits, and a comprehensive 40-configuration wedge benchmark. The code is available in the supplementary material.

\begin{table}[t!]
	\centering
	\caption{Corner Singularity: Laplace wedge with constraint-aware training.}
	\label{tab:exp4}
	\begin{tabular}{lcc}
		\arrayrulecolor{tablerule}\toprule
		\rowcolor{tableheader}
		Metric                            & Target & Discovered           \\
		\midrule
		Primary exponent                  & 0.6667 & 0.6667               \\
		\rowcolor{tablerowalt}
		Relative error                    & ---    & 0.009\%              \\
		Second harmonic                   & 1.3333 & 1.3335               \\
		\rowcolor{tablerowalt}
		Edge constraint $\sin(\mu\omega)$ & 0      & $4.9 \times 10^{-7}$ \\
		\bottomrule
	\end{tabular}
\end{table}

\begin{figure}[t!]
	\centering
	\includegraphics[width=1.00\columnwidth]{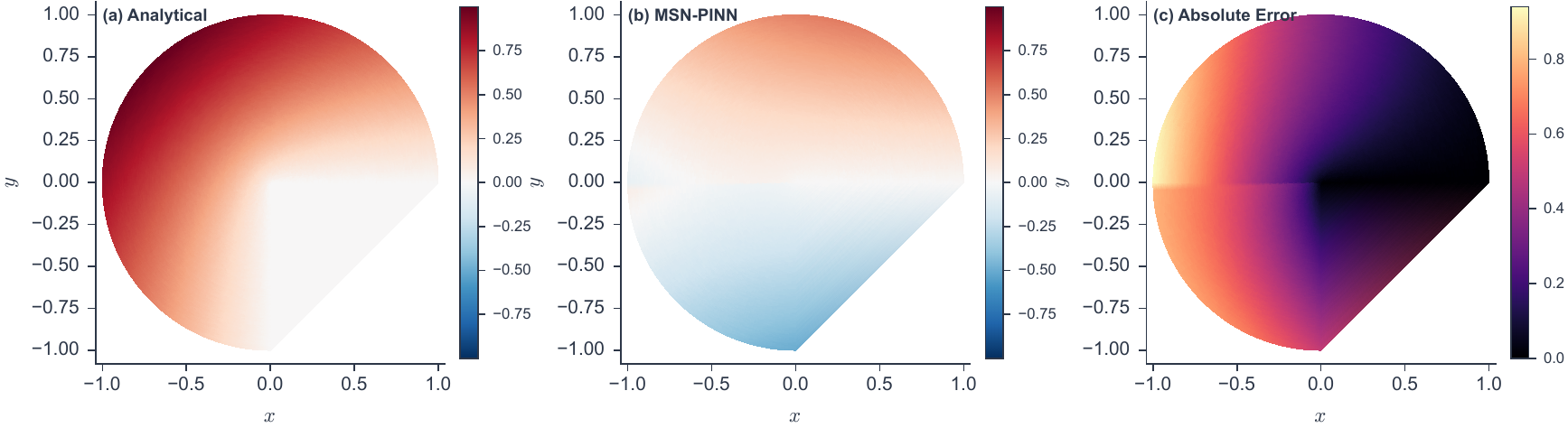}
	\includegraphics[width=1.00\columnwidth]{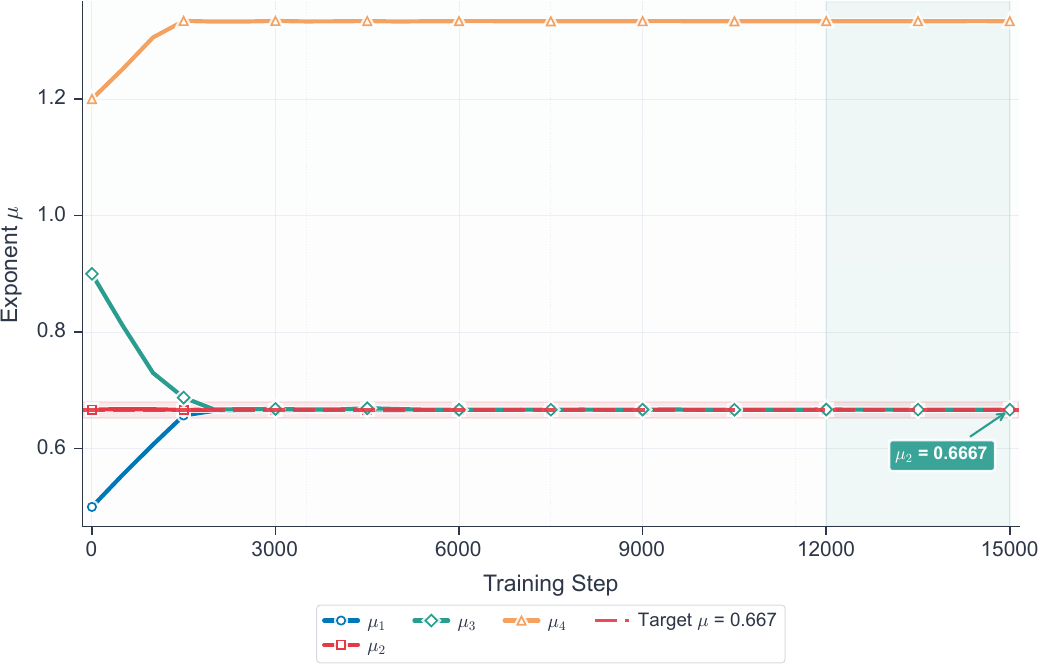}
	\includegraphics[width=1.00\columnwidth]{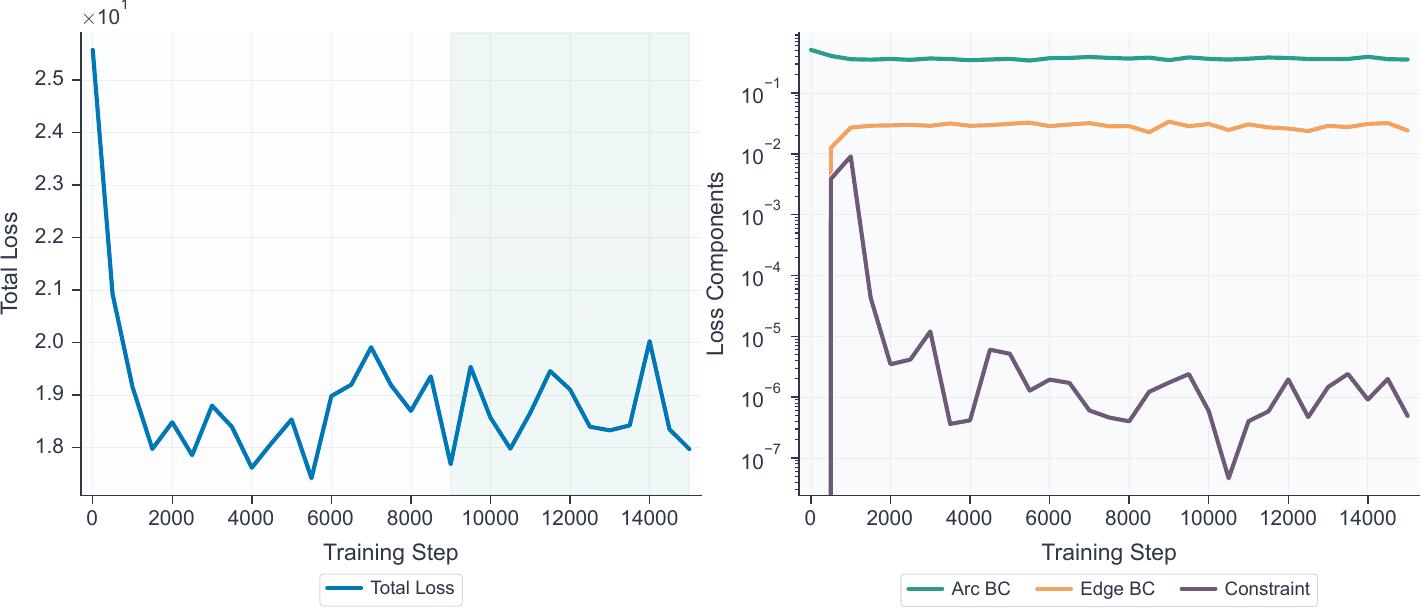}
	\caption{Corner singularity: MSN-PINN recovers the result of \citet{kondrat1967boundary} with a 0.009\% error. Top: Learned solution $u(r,\theta) = r^{2/3}\sin(2\theta/3)$ on a 270\textdegree{} wedge domain, where color indicates solution magnitude, with blue at 0 and yellow at 1. Middle: Exponent trajectory during training---without constraints, not shown, $\mu$ drifts to 0.57 with a 14.6\% error; with constraints, $\mu$ converges to the theoretical value of 0.6667. Bottom: Training loss showing rapid convergence once the constraint term activates around epoch 2000.}
	\label{fig:exp4}
\end{figure}

\subsection{Singular ODE: Physics-Informed Discovery}
\label{sec:exp3}

We recover exponents from PDE constraints rather than from labeled solution data. The boundary-layer equation
\begin{equation}
	\begin{aligned}
		x u''(x) + \frac{1}{2} u'(x) & = 0, \\
		u(1)                         & = 1,
	\end{aligned}
\end{equation}
has solution $u(x) = \sqrt{x}$ with an exponent $\alpha = 0.5$. We train the MSN-PINN with $K=4$ terms on the PDE residual loss without solution labels, concentrating collocation points near $x=0$. Quantitative results for this setting are summarized in \Cref{tab:exp3}.

MSN-PINN discovers the exponent from physics alone. Without any labeled solution data, the network learns that $\mu \approx 0.5$ is the unique exponent consistent with the differential equation and BC. This demonstrates that MSN-PINN extracts interpretable structure directly from physical laws. The exponent trajectory shows initial exploration followed by convergence: Early in training, the exponents wander as the network searches for a consistent solution; once the PDE residual begins to decrease, the exponents lock onto the correct value.

\subsection{Corner Singularity as Main Result}
\label{sec:exp4}

The corner singularity problem demonstrates the constraint-aware MSN-PINN on a challenging 2D domain. Consider Laplace's equation defined on a wedge with an opening angle $\omega = 3\pi/2$, corresponding to a 270\textdegree{} re-entrant corner:
\begin{equation}
	\begin{aligned}
		 & \nabla^2 u = 0,                         \\
		 & u|_{\theta=0} = u|_{\theta=\omega} = 0, \\
		 & u|_{r=1} = \sin(2\theta/3).
	\end{aligned}
\end{equation}
The analytical solution is $u(r,\theta) = r^{2/3} \sin(2\theta/3)$ with exponent $\alpha = 2/3 \approx 0.6667$---a classical result due to \citet{kondrat1967boundary}.

\begin{figure*}[t!]
	\centering
	\includegraphics[width=1.00\textwidth]{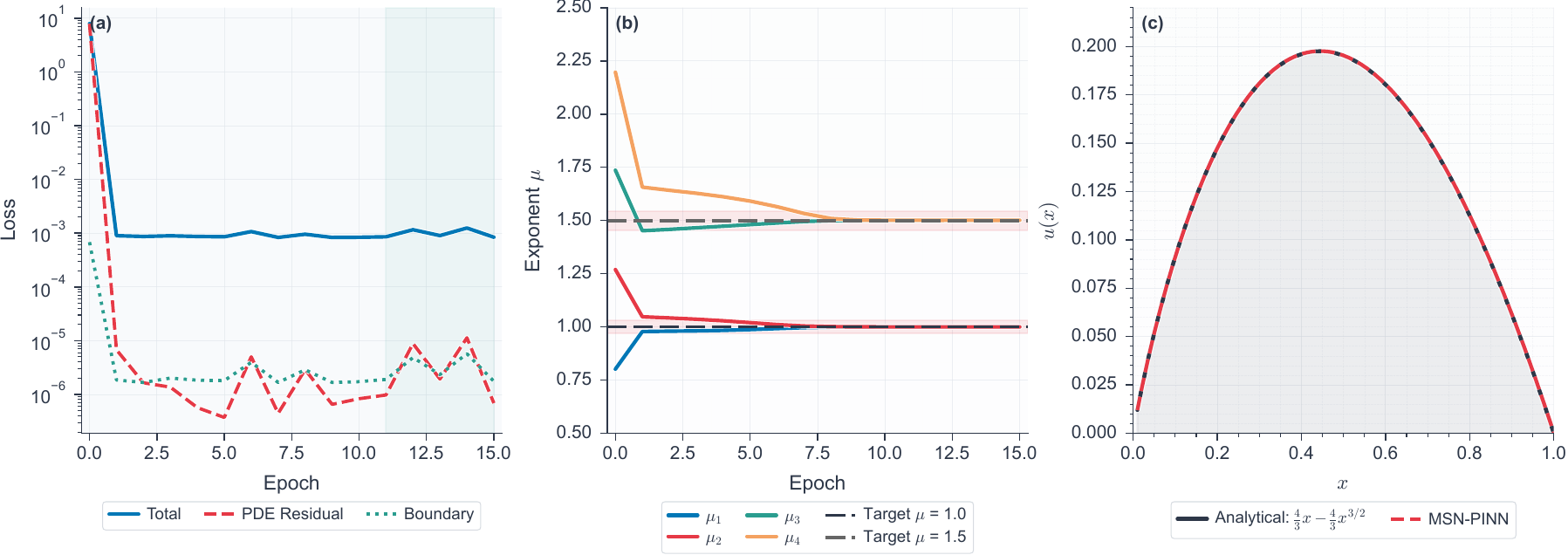}
	\caption{Singular forcing: MSN-PINN accurately discovers both solution exponents with $<$0.1\% error. The Poisson equation, $-u'' = x^{-1/2}$, has the solution $u(x) = \frac{4}{3}x - \frac{4}{3}x^{3/2}$ containing exponents 1.0 for the homogeneous term and 1.5 for the particular term. Panel (a) shows the training loss convergence over 10,000 epochs. Panel (b) shows four MSN exponents initialized randomly; two converge to 1.0 with 0.03\% error, and two to 1.5 with 0.05\% error. The redundant convergence indicates robust discovery. Panel (c) shows the learned solution, dashed, matching the analytical solution, solid, across the domain.}
	\label{fig:exp5}
\end{figure*}

\begin{table}[t!]
	\centering
	\caption{Singular Forcing: Dual exponent discovery.}
	\label{tab:exp5}
	\begin{tabular}{lcc}
		\arrayrulecolor{tablerule}\toprule
		\rowcolor{tableheader}
		Target Exponent & Discovered & Error \% \\
		\midrule
		1.0 homogeneous & 0.9997     & 0.03     \\
		\rowcolor{tablerowalt}
		1.5 particular  & 1.5008     & 0.05     \\
		\bottomrule
	\end{tabular}
\end{table}

As discussed in \Cref{sec:constraint_aware}, the Laplacian provides no gradient signal for exponent selection: All functions $r^\mu\sin(\mu\theta)$ are harmonic. Without the constraint term, training on residual and boundary data converges to $\mu \approx 0.57$, giving a 14.6\% error. Although the network matches the boundary data during training, it uses an incorrect exponent.

Adding $\mcL_{\text{constraint}} = \sum_k |c_k| \sin^2(\mu_k \omega)$ changes the outcome. Constraint-aware training achieves 0.009\% error and recovers the analytical result of \citet{kondrat1967boundary} to four significant figures, representing a 1600-fold improvement over the naive approach.

The constraint loss creates strong gradients pushing exponents toward valid values. Unlike the PDE residual, which is flat with respect to $\mu$ for harmonic functions, the constraint term $\sin^2(\mu\omega)$ has sharp minima at $\mu = n\pi/\omega$. Combined with boundary data that selects the $n=1$ mode, the optimization landscape has a unique, well-defined minimum. Quantitative results and training dynamics are shown in \Cref{tab:exp4} and \Cref{fig:exp4}.

The network also discovers the second harmonic at $\mu = 4/3 \approx 1.333$, though with a smaller coefficient. This result demonstrates MSN's ability to recover the full singularity spectrum, not just the dominant term.

\subsection{Singular Forcing}
\label{sec:exp5}

Singular forcing induces multiple exponents, which can also be identified. Consider the Poisson equation with a singular source:
\begin{equation}
	\begin{aligned}
		-u''(x) & = x^{-1/2}, \\
		u(0)    & = u(1) = 0.
	\end{aligned}
\end{equation}
By \Cref{prop:pde_constraint}, the forcing exponent $\beta = -1/2$ induces a solution exponent $\alpha = \beta + 2 = 3/2$. The analytical solution is $u(x) = \frac{4}{3}x - \frac{4}{3}x^{3/2}$, which contains two exponents: $1$ from the homogeneous solution and $1.5$ from the particular solution. Results and diagnostics are summarized in \Cref{tab:exp5} and \Cref{fig:exp5}.

MSN-PINN recovers both exponents with sub-0.1\% error. Using $K=4$ terms, the network discovers two exponents converging to $\approx 1.0$ and two exponents converging to $\approx 1.5$. This redundancy is beneficial: Multiple terms converging to the same exponent indicate robust discovery, and the coefficients of these redundant terms sum to the correct total.

%=============================================================================
\section{Conclusion}
\label{sec:conclusion}
%=============================================================================

MSNs with physics-informed training discover scaling exponents from data and PDE constraints. Unlike standard neural networks and even PINNs, which fit solutions without revealing their structure, MSN extracts the scaling exponents that physicists and engineers seek.

Our experiments validate MSN across diverse settings: singular ODEs with 1.31\% error, corner singularities matching the results of \citet{kondrat1967boundary} with 0.009\% error, singular Poisson problems with 0.03\% and 0.05\% errors, and a 40-configuration wedge benchmark with 100\% success and 0.022\% mean error. When the PDE alone provides insufficient gradient signal, constraint-aware training encodes additional physical requirements such as BC compatibility, improving accuracy by three orders of magnitude.

%%=============================================================================
%% Acknowledgments
%%=============================================================================
\section*{Acknowledgments}

The author thanks the African Institute for Mathematical Sciences, AIMS, and the Nelson Mandela African Institution of Science and Technology, NM-AIST, for their support.

%%=============================================================================
%% Impact Statement
%%=============================================================================
\section*{Impact Statement}

We present work whose goal is to advance the field of machine learning. Our work has potential societal consequences, but we do not see any that require specific highlighting here.

%=============================================================================
% References
%=============================================================================

\bibliography{references}

@article{nguessan2025msn,
  author    = {N'guessan, Gnankan Landry Regis},
  title     = {{{M}{\"u}ntz-{S}z{\'a}sz Networks: Neural Architectures with Learnable Power-Law Bases}},
  journal   = {arXiv preprint arXiv:2512.22222},
  year      = {2025},
  url       = {https://arxiv.org/abs/2512.22222}
}

@incollection{muntz1914,
  author    = {M{\"u}ntz, Ch. H.},
  title     = {{{\"U}ber den {A}pproximationssatz von {W}eierstrass}},
  booktitle = {H. A. Schwarz's Festschrift},
  year      = {1914},
  pages     = {303--312},
  address   = {Berlin}
}

@book{borwein1995polynomials,
  author    = {Borwein, Peter and Erd{\'e}lyi, Tam{\'a}s},
  title     = {{Polynomials and Polynomial Inequalities}},
  series    = {Graduate Texts in Mathematics},
  volume    = {161},
  address   = {New York},
  year      = {1995}
}

@article{shen2016muntz,
  author    = {Shen, Jie and Wang, Yingwei},
  title     = {{{M}{\"u}ntz-{G}alerkin Methods and Applications to Mixed {D}irichlet-{N}eumann Boundary Value Problems}},
  journal   = {SIAM Journal on Scientific Computing},
  volume    = {38},
  number    = {4},
  pages     = {A2357--A2381},
  year      = {2016},
  doi       = {10.1137/15M1052391}
}

@article{raissi2019physics,
  author       = {Maziar Raissi and
                  Paris Perdikaris and
                  George E. Karniadakis},
  title        = {{Physics-informed neural networks: {A} deep learning framework for solving forward and inverse problems involving nonlinear partial differential equations}},
  journal      = {J. Comput. Phys.},
  volume       = {378},
  pages        = {686--707},
  year         = {2019}
}

@article{karniadakis2021physics,
  author    = {Karniadakis, George Em and Kevrekidis, Ioannis G. and Lu, Lu and Perdikaris, Paris and Wang, Sifan and Yang, Liu},
  title     = {{Physics-informed machine learning}},
  journal   = {Nature Reviews Physics},
  volume    = {3},
  number    = {6},
  pages     = {422--440},
  year      = {2021},
  doi       = {10.1038/s42254-021-00314-5}
}

@inproceedings{krishnapriyan2021characterizing,
  author       = {Aditi S. Krishnapriyan and
                  Amir Gholami and
                  Shandian Zhe and
                  Robert M. Kirby and
                  Michael W. Mahoney},
  title        = {{Characterizing possible failure modes in physics-informed neural networks}},
  booktitle    = {NeurIPS},
  pages        = {26548--26560},
  year         = {2021}
}

@article{kharazmi2021hp,
  author       = {Ehsan Kharazmi and
                  Zhongqiang Zhang and
                  George Em Karniadakis},
  title        = {{hp-VPINNs: Variational Physics-Informed Neural Networks With Domain Decomposition}},
  journal      = {CoRR},
  volume       = {abs/2003.05385},
  year         = {2020}
}

@article{jagtap2020extended,
  author       = {Ameya D. Jagtap and
                  George E. Karniadakis},
  title        = {{Extended Physics-informed Neural Networks (XPINNs): {A} Generalized Space-Time Domain Decomposition based Deep Learning Framework for Nonlinear Partial Differential Equations}},
  booktitle    = {{AAAI} Spring Symposium: {MLPS}},
  series       = {{CEUR} Workshop Proceedings},
  volume       = {2964},
  year         = {2021}
}

@article{jagtap2020conservative,
  author    = {Jagtap, Ameya D. and Kharazmi, Ehsan and Karniadakis, George Em},
  title     = {{Conservative physics-informed neural networks on discrete domains for conservation laws}},
  journal   = {Computer Methods in Applied Mechanics and Engineering},
  volume    = {365},
  pages     = {113028},
  year      = {2020}
}

@article{jin2024solving,
  author       = {Tianhao Hu and
                  Bangti Jin and
                  Zhi Zhou},
  title        = {{Solving Poisson Problems in Polygonal Domains with Singularity Enriched Physics Informed Neural Networks}},
  journal      = {{SIAM} J. Sci. Comput.},
  volume       = {46},
  number       = {4},
  pages        = {369},
  year         = {2024}
}

@article{kondrat1967boundary,
  title={{Boundary value problems for elliptic equations in domains with conical or angular points}},
  author={Kondrat'ev, Vladimir Aleksandrovich},
  journal={Trudy Moskovskogo Matematicheskogo Obshchestva},
  volume={16},
  pages={209--292},
  year={1967},
  publisher={Moscow Mathematical Society}
}

@book{grisvard1985elliptic,
  author    = {Grisvard, Pierre},
  title     = {{Elliptic Problems in Nonsmooth Domains}},
  series    = {Monographs and Studies in Mathematics},
  volume    = {24},
  year      = {1985},
  note      = {SIAM Classics reprint 2011}
}

@book{dauge1988elliptic,
  author    = {Dauge, Monique},
  title     = {{Elliptic Boundary Value Problems on Corner Domains: Smoothness and Asymptotics of Solutions}},
  series    = {Lecture Notes in Mathematics},
  volume    = {1341},
  year      = {1988}
}

@article{brunton2016discovering,
  title={{Discovering governing equations from data by sparse identification of nonlinear dynamical systems}},
  author={Brunton, Steven L and Proctor, Joshua L and Kutz, J Nathan},
  journal={Proceedings of the national academy of sciences},
  volume={113},
  number={15},
  pages={3932--3937},
  year={2016},
  publisher={National Academy of Sciences}
}

@article{udrescu2020ai,
  author       = {Silviu{-}Marian Udrescu and
                  Max Tegmark},
  title        = {{{AI} Feynman: a Physics-Inspired Method for Symbolic Regression}},
  journal      = {CoRR},
  volume       = {abs/1905.11481},
  year         = {2019}
}

@article{rudy2017data,
  author    = {Rudy, Samuel H. and Brunton, Steven L. and Proctor, Joshua L. and Kutz, J. Nathan},
  title     = {{Data-driven discovery of partial differential equations}},
  journal   = {Science Advances},
  volume    = {3},
  number    = {4},
  pages     = {e1602614},
  year      = {2017},
  eprint    = {1609.06401},
  archiveprefix = {arXiv}
}

@article{champion2019data,
  author    = {Champion, Kathleen and Lusch, Bethany and Kutz, J. Nathan and Brunton, Steven L.},
  title     = {{Data-driven discovery of coordinates and governing equations}},
  journal   = {Proceedings of the National Academy of Sciences},
  volume    = {116},
  number    = {45},
  pages     = {22445--22451},
  year      = {2019}
}

@article{lu2021learning,
  author       = {Lu Lu and
                  Pengzhan Jin and
                  Guofei Pang and
                  Zhongqiang Zhang and
                  George Em Karniadakis},
  title        = {{Learning nonlinear operators via DeepONet based on the universal approximation theorem of operators}},
  journal      = {Nat. Mach. Intell.},
  volume       = {3},
  number       = {3},
  pages        = {218--229},
  year         = {2021}
}

@inproceedings{li2021fourier,
  author       = {Zongyi Li and
                  Nikola Borislavov Kovachki and
                  Kamyar Azizzadenesheli and
                  Burigede Liu and
                  Kaushik Bhattacharya and
                  Andrew M. Stuart and
                  Anima Anandkumar},
  title        = {{Fourier Neural Operator for Parametric Partial Differential Equations}},
  booktitle    = {{ICLR}},
  year         = {2021}
}

@article{liu2024kan,
  author    = {Liu, Ziming and Wang, Yixuan and Vaidya, Sachin and Ruehle, Fabian and Halverson, James and Solja{\v{c}}i{\'c}, Marin and Hou, Thomas Y. and Tegmark, Max},
  title     = {{{KAN}: {K}olmogorov-{A}rnold Networks}},
  journal   = {arXiv preprint arXiv:2404.19756},
  year      = {2024}
}

@article{liu2025kan,
  author       = {Ziming Liu and
                  Pingchuan Ma and
                  Yixuan Wang and
                  Wojciech Matusik and
                  Max Tegmark},
  title        = {{{KAN} 2.0: Kolmogorov-Arnold Networks Meet Science}},
  journal      = {CoRR},
  volume       = {abs/2408.10205},
  year         = {2024}
}

@article{babuska1997partition,
  author    = {Babu{\v{s}}ka, Ivo and Melenk, Jens Markus},
  title     = {{The partition of unity method}},
  journal   = {International Journal for Numerical Methods in Engineering},
  volume    = {40},
  pages     = {727--758},
  year      = {1997}
}

@book{schwab1998p,
  author    = {Schwab, Christoph},
  title     = {{$p$- and $hp$- Finite Element Methods: Theory and Applications in Solid and Fluid Mechanics}},
  year      = {1998}
}

@article{moes1999finite,
  author    = {Mo{\"e}s, Nicolas and Dolbow, John and Belytschko, Ted},
  title     = {{A finite element method for crack growth without remeshing}},
  journal   = {International Journal for Numerical Methods in Engineering},
  volume    = {46},
  pages     = {131--150},
  year      = {1999}
}

@article{babuska1979direct,
  author    = {Babu{\v{s}}ka, Ivo and Kellogg, R. Bruce and Pitk{\"a}ranta, Juhani},
  title     = {{Direct and inverse error estimates for finite elements with mesh refinements}},
  journal   = {Numerische Mathematik},
  volume    = {33},
  pages     = {447--471},
  year      = {1979}
}

@inproceedings{rahaman2019spectral,
  author       = {Nasim Rahaman and
                  Aristide Baratin and
                  Devansh Arpit and
                  Felix Draxler and
                  Min Lin and
                  Fred A. Hamprecht and
                  Yoshua Bengio and
                  Aaron C. Courville},
  title        = {{On the Spectral Bias of Neural Networks}},
  booktitle    = {{ICML}},
  series       = {Proceedings of Machine Learning Research},
  volume       = {97},
  pages        = {5301--5310},
  year         = {2019}
}

@article{marion2023leveraging,
  author       = {Pierre Marion and
                  Rapha{\"{e}}l Berthier},
  title        = {{Leveraging the two-timescale regime to demonstrate convergence of neural networks}},
  booktitle    = {NeurIPS},
  year         = {2023}
}

@book{barenblatt1996scaling,
  author    = {Barenblatt, Grigory Isaakovich},
  title     = {{Scaling, Self-Similarity, and Intermediate Asymptotics}},
  series    = {Cambridge Texts in Applied Mathematics},
  number    = {14},
  year      = {1996}
}

@article{kolmogorov1941local,
  author    = {Kolmogorov, Andrey Nikolaevich},
  title     = {{The Local Structure of Turbulence in Incompressible Viscous Fluid for Very Large {R}eynolds Numbers}},
  journal   = {Doklady Akademii Nauk SSSR},
  volume    = {30},
  pages     = {299--303},
  year      = {1941},
  note      = {Reprinted in Proc. Roy. Soc. A, 434:9--13, 1991}
}

@book{anderson2017fracture,
  author    = {Anderson, Ted L.},
  title     = {{Fracture Mechanics: Fundamentals and Applications}},
  edition   = {4th},
  year      = {2017}
}

@article{cybenko1989approximation,
  title={{Approximation by superpositions of a sigmoidal function}},
  author={Cybenko, George},
  journal={Mathematics of control, signals and systems},
  volume={2},
  number={4},
  pages={303--314},
  year={1989},
  publisher={Springer}
}
\bibliographystyle{icml2026}

%%%%%%%%%%%%%%%%%%%%%%%%%%%%%%%%%%%%%%%%%%%%%%%%%%%%%%%%%%%%%%%%%%%%%%%%%%%%%%%
%%%%%%%%%%%%%%%%%%%%%%%%%%%%%%%%%%%%%%%%%%%%%%%%%%%%%%%%%%%%%%%%%%%%%%%%%%%%%%%
% APPENDIX
%%%%%%%%%%%%%%%%%%%%%%%%%%%%%%%%%%%%%%%%%%%%%%%%%%%%%%%%%%%%%%%%%%%%%%%%%%%%%%%
%%%%%%%%%%%%%%%%%%%%%%%%%%%%%%%%%%%%%%%%%%%%%%%%%%%%%%%%%%%%%%%%%%%%%%%%%%%%%%%
\newpage
\appendix
\onecolumn

%=============================================================================
% APPENDIX TABLE OF CONTENTS
%=============================================================================
\let\addcontentsline\origaddcontentsline
\startcontents[apx]
\section*{Appendix Table of Contents}
\printcontents[apx]{l}{1}{\setcounter{tocdepth}{2}}

%=============================================================================
\section{List of Notation}
\label[appendix]{app:notation}

{\small
	\begin{longtable}{p{0.24\textwidth}p{0.70\textwidth}}
		\caption{List of notation used in the paper.}                                                                                                                                                             \\
		\toprule
		Symbol                                                                                                             & Meaning                                                                              \\
		\midrule
		\endfirsthead
		\toprule
		Symbol                                                                                                             & Meaning                                                                              \\
		\midrule
		\endhead
		\midrule
		\multicolumn{2}{r}{continued on next page}                                                                                                                                                                \\
		\endfoot
		\bottomrule
		\endlastfoot
		\multicolumn{2}{l}{Sets and spaces}                                                                                                                                                                       \\
		$\R$, $\R^d$, $C[0,1]$                                                                                             & Real line, Euclidean space, and continuous functions on $[0,1]$.                     \\
		$\Omega$, $\partial\Omega$                                                                                         & Spatial domain and its boundary.                                                     \\
		\addlinespace
		\multicolumn{2}{l}{Norms and operators}                                                                                                                                                                   \\
		$\|\cdot\|$, $\|\cdot\|_1$, $\|\cdot\|_\infty$                                                                     & Euclidean, $\ell_1$, and supremum norms.                                             \\
		$\nabla$, $\nabla^2$, $\diag(\cdot)$                                                                               & Gradient, Laplacian, and diagonal matrix operator.                                   \\
		\addlinespace
		\multicolumn{2}{l}{Coordinates and indices}                                                                                                                                                               \\
		$x$, $(r,\theta)$, $\omega$                                                                                        & Input coordinate; polar coordinates and wedge angle.                                 \\
		$i, k$                                                                                                             & Indices for samples and ansatz terms.                                                \\
		$t$, $T$, $d$                                                                                                      & Iteration index, total iterations, and spatial dimension.                            \\
		\addlinespace
		\multicolumn{2}{l}{MSN ansatz}                                                                                                                                                                            \\
		$u(x)$, $u_\theta(x) = \sum_{k=1}^{K} c_k x^{\mu_k}$                                                               & Exact solution and MSN ansatz.                                                       \\
		$K$, $K^*$, $\theta=(\bmu,\bc)$, $\mu_k$, $c_k$                                                                    & Term counts and MSN parameters.                                                      \\
		$\mu_{\min}$, $\mu_{\max}$                                                                                         & Bounds for exponent values.                                                          \\
		$\hat{u}$, $\hat{\bmu}$, $\hat{\bc}$                                                                               & Learned solution and parameter estimates.                                            \\
		$\alpha_k$, $\bmu^*$, $\bc^*$, $c_k^*$                                                                             & Ground-truth exponents and coefficients.                                             \\
		\addlinespace
		\multicolumn{2}{l}{PDE and boundary data}                                                                                                                                                                 \\
		$\mcD$, $\mcB$, $f(x)$, $g(x)$, $\partial_n u$, $\beta$                                                            & Operators, forcing, boundary data, normal derivative, and singular forcing exponent. \\
		\addlinespace
		\multicolumn{2}{l}{Training and data}                                                                                                                                                                     \\
		$\mcL_{\text{total}}$, $\mcL_{\text{res}}$, $\mcL_{\text{BC}}$, $\mcL_{\text{sparse}}$, $\mcL_{\text{constraint}}$ & Training loss terms.                                                                 \\
		$w_r$, $w_b$, $w_s$, $w_{\mathrm{con}}$                                                                            & Loss weights.                                                                        \\
		$x_i$, $x_i^r$, $x_i^b$                                                                                            & Data samples and residual/boundary collocation points.                               \\
		$N$, $N_r$, $N_b$, $p$                                                                                             & Sample counts and sampling exponent.                                                 \\
		\addlinespace
		\multicolumn{2}{l}{Theory and metrics}                                                                                                                                                                    \\
		$R(\mu)$, $L(\mu)$                                                                                                 & Reduced losses in approximation/identifiability analysis.                            \\
		$\Delta$, $\delta$, $\epsilon$                                                                                     & Separation and target-accuracy parameters.                                           \\
		$H$, $\widetilde{G}$, $\lambda_n$                                                                                  & Hessian/Gram matrices and M\"untz-Sz\'asz exponent sequence.                         \\
		$\phi(x)$, $n_k$, $\mathrm{RelErr}(\mu)$, MSE                                                                      & Signed-domain extension, log-modified basis, and evaluation metrics.                 \\
	\end{longtable}
}

%=============================================================================
\section{Proof of Theorem~\ref{thm:rate}}
\label[appendix]{app:rate_proof}

\begin{proof}
	The expansion
	\begin{equation}
		\begin{split}
			x^\mu - x^\alpha & = x^\alpha(x^{\mu-\alpha} - 1)                         \\
			                 & = x^\alpha((\mu-\alpha)\log x + \mcO((\mu-\alpha)^2)),
		\end{split}
	\end{equation}
	and the closed-form minimizer $c^*(\mu)$ yield $R(\mu) = \mcO((\mu-\alpha)^2)$ by a Taylor expansion around $\mu = \alpha$. In contrast, the best uniform error is generally $\mcO(|\mu-\alpha|)$ because the $\log x$ term cannot be canceled uniformly by a scalar coefficient.
\end{proof}

\section{Proof of Theorem~\ref{thm:identifiability}}
\label[appendix]{app:identifiability_proof}

\begin{proof}
	We proceed by induction on $K^*$.

	Base case, $K^* = 1$: Suppose $c_1^* x^{\alpha_1} = \sum_{j=1}^K \tilde{c}_j x^{\tilde{\mu}_j}$ for all $x \in (0,1]$. Taking $x \to 0^+$, the dominant term with the smallest exponent on each side must match. The left side has a single exponent $\alpha_1$. On the right, let $\tilde{\mu}_{j_0} = \min_j \tilde{\mu}_j$. We must have $\tilde{\mu}_{j_0} = \alpha_1$; otherwise, one side would dominate the other as $x \to 0$. Matching coefficients of $x^{\alpha_1}$ gives $\tilde{c}_{j_0} = c_1^*$. The remaining terms $\sum_{j \neq j_0} \tilde{c}_j x^{\tilde{\mu}_j} = 0$ for all $x$, forcing $\tilde{c}_j = 0$ for $j \neq j_0$.

	Inductive step: Assume the result holds for $K^* - 1$. Given $\sum_{k=1}^{K^*} c_k^* x^{\alpha_k} = \sum_{j=1}^K \tilde{c}_j x^{\tilde{\mu}_j}$, match the smallest exponent as above to find $\tilde{\mu}_{j_0} = \alpha_1$ with $\tilde{c}_{j_0} = c_1^*$. Subtracting:
	\begin{equation}
		\sum_{k=2}^{K^*} c_k^* x^{\alpha_k} = \sum_{j \neq j_0} \tilde{c}_j x^{\tilde{\mu}_j}.
	\end{equation}
	By the inductive hypothesis, the remaining exponents $\{\alpha_2, \ldots, \alpha_{K^*}\}$ are recovered.
\end{proof}

\section{Proof of Theorem~\ref{thm:stability}}
\label[appendix]{app:stability_proof}

\begin{proof}
	Consider the empirical loss
	\begin{equation}
		\hat{\mcL}(\bmu, \bc) = \frac{1}{N}\sum_{i=1}^N (y_i - \sum_{k=1}^K c_k x_i^{\mu_k})^2.
	\end{equation}
	Let $(\balpha, \bc^*)$ denote the true parameters, and let $(\hat{\bmu}, \hat{\bc})$ be a minimizer of $\hat{\mcL}$.

	Near the true parameters, the loss expands as
	\begin{equation}
		\hat{\mcL}(\bmu, \bc) = \hat{\mcL}(\balpha, \bc^*) + \frac{1}{2}(\bmu - \balpha, \bc - \bc^*)^T H (\bmu - \balpha, \bc - \bc^*) + O(\|\cdot\|^3),
	\end{equation}
	where $H$ is the Hessian matrix evaluated at $(\balpha, \bc^*)$.

	The Hessian has a block structure
	\begin{equation}
		H = \begin{pmatrix} H_{\mu\mu} & H_{\mu c} \\ H_{c\mu} & H_{cc} \end{pmatrix}.
	\end{equation}
	The coefficient block $H_{cc}$ has entries $(H_{cc})_{jk} = \frac{2}{N}\sum_i x_i^{\alpha_j + \alpha_k}$, which is the Gram matrix of the power-law basis functions. This is positive definite for distinct exponents.

	The exponent block $H_{\mu\mu}$ involves second derivatives with respect to $\mu_k$. At the true parameters, where $\nabla \hat{\mcL} = 0$ in expectation,
	\begin{equation}
		(H_{\mu\mu})_{kk} = \frac{2}{N}\sum_i (c_k^*)^2 x_i^{2\alpha_k} (\log x_i)^2 + \text{(cross terms)}.
	\end{equation}

	The minimum eigenvalue of $H_{\mu\mu}$ is bounded below by
	\begin{equation}
		\lambda_{\min}(H_{\mu\mu}) \geq C \cdot c_{\min}^2 \cdot \Delta^2,
	\end{equation}
	where $C$ depends on the sampling distribution. The factor $\Delta^2$ arises because close exponents yield nearly collinear basis functions, degrading the conditioning.

	By standard M-estimation theory, the estimation error satisfies
	\begin{equation}
		\|\hat{\bmu} - \balpha\| \leq \|H_{\mu\mu}^{-1}\nabla_{\mu} \hat{\mcL}(\balpha, \bc^*)\|.
	\end{equation}
	At the true parameters with noisy observations $y_i = f(x_i) + \epsilon_i$,
	\begin{equation}
		\nabla_{\mu_k} \hat{\mcL}(\balpha, \bc^*) = -\frac{2}{N}\sum_i \epsilon_i \cdot c_k^* x_i^{\alpha_k} \log x_i.
	\end{equation}
	Define $\phi_i \in \R^K$ by $(\phi_i)_k = x_i^{\alpha_k}\log x_i$, let $D = \diag(\bc^*)$, and set $g = -\frac{2}{N}\sum_i \epsilon_i \phi_i$. Then $\nabla_{\mu}\hat{\mcL}(\balpha, \bc^*) = D g$. In the same local regime, $H_{\mu\mu} \approx 2 D \widetilde{G} D$ with $\widetilde{G} = \frac{1}{N}\sum_i \phi_i \phi_i^\top$, so
	\begin{equation}
		\|H_{\mu\mu}^{-1}\nabla_{\mu}\hat{\mcL}(\balpha, \bc^*)\| \leq \frac{1}{2}\|D^{-1}\|\|\widetilde{G}^{-1}\|\|g\|.
	\end{equation}

	By concentration inequalities, $\|g\| = O(\frac{\sigma}{\sqrt{N}})$, and the sampling assumption yields $\|\widetilde{G}^{-1}\| \leq \frac{1}{C\Delta^2}$. Combining these bounds gives
	\begin{equation}
		\|\hat{\bmu} - \balpha\| = O(\frac{\sigma}{c_{\min} \Delta^2 \sqrt{N}}).
	\end{equation}
	Taking $N$ sufficiently large gives the stated bound.
\end{proof}

\section{Proof of Proposition~\ref{prop:pde_constraint}}
\label[appendix]{app:pde_constraint_proof}

\begin{proof}
	Integrating $-u'' = x^\beta$ twice yields $u(x) = -\frac{x^{\beta+2}}{(\beta+1)(\beta+2)} - C_1 x - C_2$. The BC $u(0) = 0$ implies $C_2 = 0$, assuming $\beta + 2 > 0$, and $u(1) = 0$ gives $C_1 = -\frac{1}{(\beta+1)(\beta+2)}$. Substituting, $u(x) = \frac{1}{(\beta+1)(\beta+2)}(x - x^{\beta+2})$.
\end{proof}

\section{Proof of Theorem~\ref{thm:constraint}}
\label[appendix]{app:constraint_proof}

\begin{proof}
	At a critical point, $\nabla_{\mu_k} \mcL = 0$ for all $k$. The constraint term contributes:
	\begin{equation}
		\begin{split}
			\frac{\partial}{\partial \mu_k}[\lambda |c_k| \sin^2(\mu_k\omega)] & = \lambda |c_k| \cdot 2\sin(\mu_k\omega)\cos(\mu_k\omega) \cdot \omega \\
			                                                                   & = \lambda |c_k| \omega \sin(2\mu_k\omega).
		\end{split}
	\end{equation}

	The residual term $\mcL_{\text{res}}$ contributes zero gradient with respect to $\mu_k$ at points where $\mcL_{\text{res}} = 0$, since all harmonic functions $r^\mu\sin(\mu\theta)$ achieve zero residual.

	Therefore, $\nabla_{\mu_k} \mcL = \lambda |c_k| \omega \sin(2\mu_k\omega) = 0$. For $|c_k| > 0$, this requires $\sin(2\mu_k\omega) = 0$, giving $\mu_k\omega \in \{\frac{n\pi}{2} : n \in \N\}$.

	Combined with the BC $\mcL_{\text{BC}} = 0$, which requires the solution to vanish on both edges, we need $\sin(\mu_k\omega) = 0$, further constraining to $\mu_k\omega \in \{n\pi : n \in \N\}$, that is, $\mu_k \in \{n\pi/\omega : n \in \N\}$.
\end{proof}

\section{More Details on Methods}
\label[appendix]{app:methods}

\subsubsection{Geometric Interpretation}

Visualizing MSN training geometrically clarifies how optimization works. Consider fitting $f(x) = x^{0.5}$ with a single-term MSN $u(x) = c \cdot x^\mu$. The loss landscape over $(\mu, c)$ has a clear structure. For each fixed $\mu$, the optimal coefficient $c^*(\mu) = \argmin_c \int_0^1 (f(x) - c x^\mu)^2 dx$ is a linear projection. The resulting profile $L(\mu) = \min_c \int_0^1 (f(x) - c x^\mu)^2 dx$ has a unique minimum at $\mu = 0.5$. Near this optimum, $L(\mu) \sim |\mu - 0.5|^2$, matching the quadratic convergence rate stated in \Cref{thm:rate}. This geometry explains why gradient-based optimization succeeds: Although loss is nonconvex, it has a single dominant basin around the true exponent.

\subsubsection{Exponent Constraints via Reparameterization}

Physical considerations often constrain the admissible exponent range. For instance, square-integrable solutions require $\mu > -1/2$. We enforce these bounds through sigmoid reparameterization:
\begin{equation}
	\mu_k = \mu_{\min} + (\mu_{\max} - \mu_{\min}) \cdot \sigma(\mu_k^{\text{raw}}),
\end{equation}
where $\mu_k^{\text{raw}} \in \R$ is the unconstrained learnable parameter, and $\sigma(z) = \frac{1}{1+e^{-z}}$. This ensures $\mu_k \in (\mu_{\min}, \mu_{\max})$ throughout training while permitting unrestricted gradient flow.

\subsubsection{Extension to Multiple Dimensions and Signed Domains}

For problems on $\R$, rather than just $(0,1]$, we decompose the function into even and odd components:
\begin{equation}
	\phi(x) = \underbrace{\sum_{k=1}^{K} a_k |x|^{\mu_k}}_{\text{even}} + \underbrace{\sum_{k=1}^{K} b_k \sign(x)|x|^{\lambda_k}}_{\text{odd}}.
\end{equation}

For radial problems in $\R^d$, the ansatz becomes $u(r,\theta) = \sum_k c_k(\theta) r^{\mu_k}$, where the angular dependence is encoded in the coefficients. This separable structure is natural for corner singularities, where solutions typically have the form $r^\mu \phi(\theta)$.

\subsubsection{Analytical Derivatives: A Key Advantage}

The MSN ansatz (Eq. \ref{eq:msn_ansatz}) computes derivatives analytically:
\begin{align}
	\frac{d u_\theta}{dx}(x)     & = \sum_{k=1}^{K} c_k \mu_k x^{\mu_k - 1},             \\
	\frac{d^2 u_\theta}{dx^2}(x) & = \sum_{k=1}^{K} c_k \mu_k (\mu_k - 1) x^{\mu_k - 2}.
\end{align}

Compared to automatic differentiation through network layers, analytical derivatives offer several advantages: They are exact, efficient, and transparent. Automatic differentiation accumulates numerical errors, requires a backward pass, and hides derivative structure. Analytical derivatives also capture singularities correctly; for example, $\frac{d}{dx} x^{0.5} = 0.5 x^{-0.5}$ diverges at $x=0$. Standard PINNs with MLP architectures compute $\mcD[u_\theta]$ by differentiating through the network using automatic differentiation, which, for second-order PDEs, requires second-order automatic differentiation---computationally expensive and numerically delicate. MSN avoids these issues entirely.

\subsubsection{The Role of Collocation Point Placement}

Collocation point placement significantly affects both approximation accuracy and exponent recovery. For singular problems, points should be concentrated near the singularity while maintaining coverage of the entire domain. We use the transformation $x_i = t_i^p$ for $t_i$ uniformly spaced in $[0,1]$, with $p \geq 2$ concentrating points near $x=0$. This is analogous to graded meshes in FEMs \cite{babuska1979direct}. Our experiments in \Cref{app:exp_sampling} characterize how sampling density affects recovery accuracy.

\section{Experimental Details}
\label[appendix]{app:experimental_details}

\subsection{Hardware and Software}

We used PyTorch 2.0 with custom autograd functions for analytical derivative computation. Training times ranged from 30 seconds for one-dimensional (1D) experiments to 5 minutes for the 2D wedge experiment.

\subsection{Training Configuration}

\Cref{tab:hyperparams} lists the default hyperparameters used across experiments.

\begin{table}[t!]
	\centering
	\caption{Default hyperparameters used across experiments.}
	\label{tab:hyperparams}
	\begin{tabular}{ll}
		\arrayrulecolor{tablerule}\toprule
		\rowcolor{tableheader}
		Hyperparameter                                         & Value          \\
		\midrule
		Optimizer                                              & Adam           \\
		\rowcolor{tablerowalt}
		Exponent learning rate $\eta_\mu$                      & 0.005          \\
		Coefficient learning rate $\eta_c$                     & 0.01           \\
		\rowcolor{tablerowalt}
		Learning rate ratio $\eta_\mu:\eta_c$                  & 0.5            \\
		Epochs                                                 & 10,000--15,000 \\
		\rowcolor{tablerowalt}
		Gradient clipping norm                                 & 1.0            \\
		Sparsity weight $w_s$                                  & 0.001          \\
		\rowcolor{tablerowalt}
		BC weight $w_b$                                        & 100            \\
		Constraint weight $w_{\text{con}}$, Corner Singularity & 10             \\
		\rowcolor{tablerowalt}
		Exponent bounds $[\mu_{\min}, \mu_{\max}]$             & $[0.1, 3.0]$   \\
		\bottomrule
	\end{tabular}
\end{table}

\subsection{Collocation Point Sampling}

For 1D problems, we use the transformation $x_i = t_i^p$, where $t_i$ are uniformly spaced in $[0,1]$. We set $p=2$ for moderate concentration near $x=0$ and use $p=4$ for stronger concentration in highly singular problems.

For the 2D wedge in the Corner Singularity experiment, we sample 500 interior points in $(r,\theta)$ with $r$ concentrated near the origin. We use 200 points on the arc boundary at $r=1$, uniformly distributed in $\theta$, and 100 points on each edge boundary at $\theta=0$ and $\theta=\omega$, concentrated near the corner.

\subsection{Initialization}

We initialize the exponents uniformly distributed in $[\mu_{\min}, \mu_{\max}]$, then perturb them by $\mcN(0, 0.1^2)$, and sample the coefficients as $c_k \sim \mcN(0, 0.1^2)$. We tested multiple random initializations; the results are consistent across different initialization choices.

\subsection{Experiment-Specific Details}

For the Corner Singularity experiment, we used two-phase training: Phase A, spanning epochs 1--5000, with a high constraint weight $w_{\text{con}}=10$ to establish correct exponents, followed by Phase B, covering epochs 5001--15000, with a reduced constraint weight $w_{\text{con}}=1$ for fine-tuning.

For the Singular Forcing experiment, the forcing term $x^{-1/2}$ is singular at $x=0$. We exclude collocation points with $x < 0.01$ to avoid numerical issues. The PDE residual is computed analytically using:
\begin{equation}
	-u''_\theta(x) = -\sum_k c_k \mu_k(\mu_k - 1) x^{\mu_k - 2}.
\end{equation}

\section{Additional Experiments}
\label[appendix]{app:additional_experiments}

Additional experiments validate MSN's exponent recovery capabilities. The Single Exponent Recovery and Multiple Exponent Recovery experiments test supervised settings, while the Wedge Benchmark explores different wedge geometries and BCs.

\subsection{Single Exponent Recovery}
\label[appendix]{app:exp1}

We recover a single exponent from supervised data. We fit $f(x) = x^{0.5}$ using MSN with $K=4$ terms on $N=200$ samples drawn from $[0.01, 1]$, sampled as $x_i = t_i^2$ for uniform $t_i$ to concentrate points near the origin. Training is performed using the Adam optimizer with $\eta_\mu = 0.005$, $\eta_c = 0.01$, for 10,000 epochs with $\ell_1$ sparsity weight $w_s = 0.001$. Results are summarized in \Cref{tab:exp1} and illustrated in \Cref{fig:exp1}.

\begin{table}[h!]
	\centering
	\caption{Single Exponent Recovery: Clean data.}
	\label{tab:exp1}
	\begin{tabular}{lcc}
		\arrayrulecolor{tablerule}\toprule
		\rowcolor{tableheader}
		Metric                    & Target & Discovered           \\
		\midrule
		Dominant exponent $\mu_1$ & 0.5000 & 0.4927               \\
		\rowcolor{tablerowalt}
		Relative error            & ---    & 1.45\%               \\
		Function MSE              & ---    & $2.3 \times 10^{-5}$ \\
		\bottomrule
	\end{tabular}
\end{table}

\begin{figure}[h!]
	\centering
	\includegraphics[width=0.7\columnwidth]{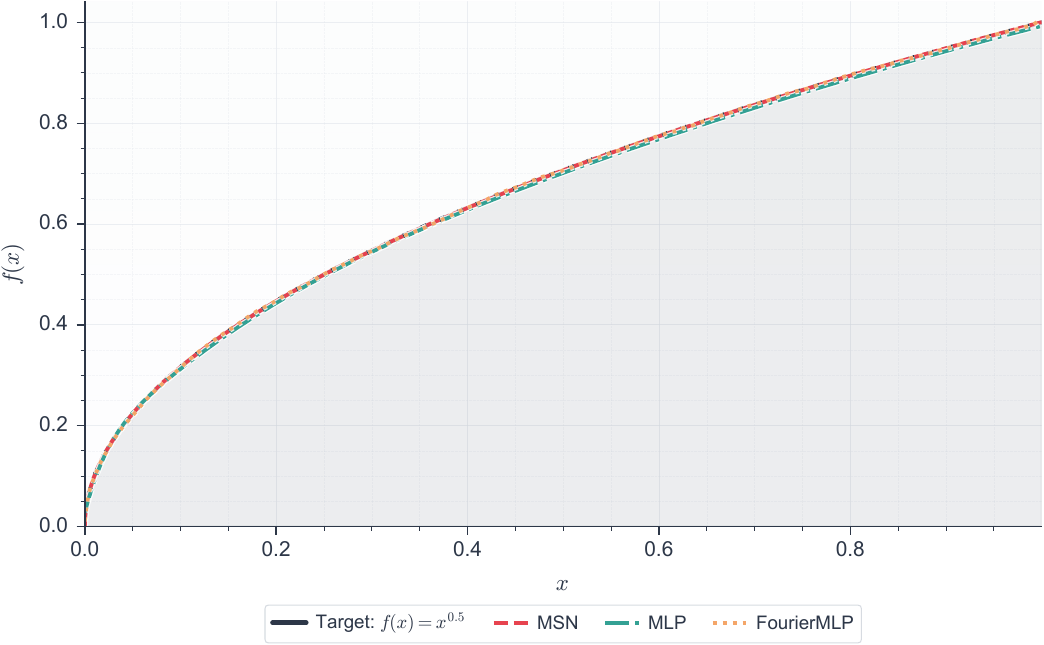}
	\includegraphics[width=0.7\columnwidth]{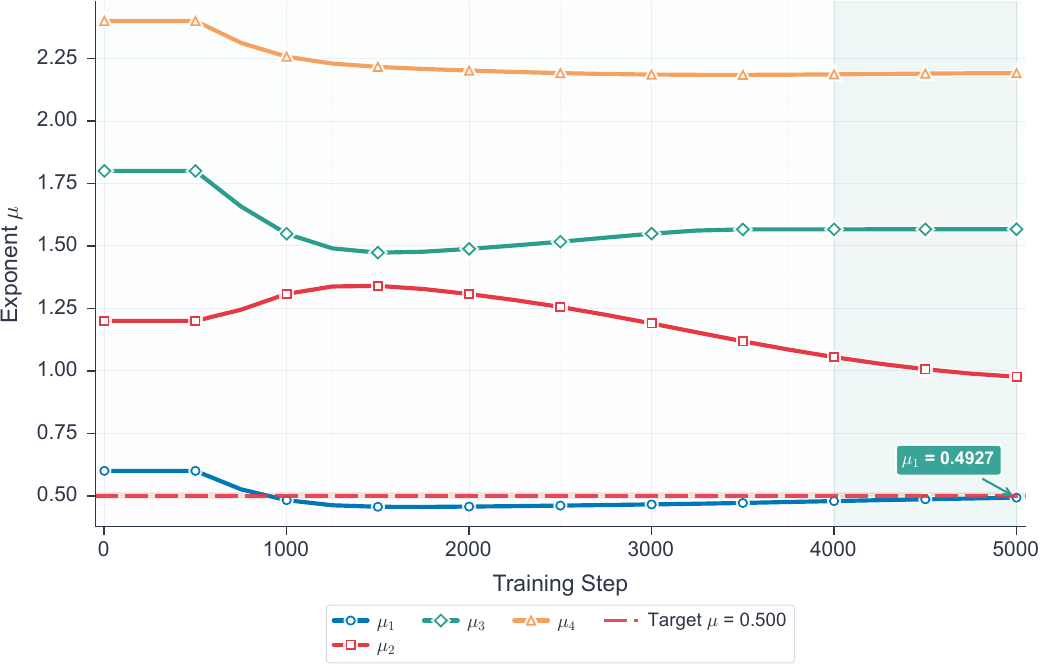}
	\caption{Single Exponent Recovery. Top: Function fit comparing the true $x^{0.5}$ with the MSN approximation. Bottom: Exponent trajectory during training, converging to 0.4927 from a random initialization.}
	\label{fig:exp1}
\end{figure}

The 1.45\% error is small but nonzero. Three reasons prevent exact recovery: finite samples, with 200 points, the empirical loss differs from the population loss; sparsity regularization, the $\ell_1$ penalty biases exponents toward values with smaller optimal coefficients; and optimization, gradient descent finds a local minimum rather than the global optimum. The recovery suffices to identify the true exponent as 0.5 within typical experimental uncertainty.

\subsection{Noise Robustness}
\label[appendix]{app:exp1b}

Real data contains measurement noise. We test robustness by adding Gaussian noise: $y_i = x_i^{0.5} + \epsilon_i$ with $\epsilon_i \sim \mcN(0, \sigma^2)$. Summary statistics and behavior are reported in \Cref{tab:exp1b} and \Cref{fig:exp1b}. Recovery accuracy remains stable across three orders of magnitude of noise, ranging from $\sigma = 0$ to $\sigma = 0.05$. The error changes by fewer than 1 percentage point. The power-law ansatz naturally regularizes. Unlike flexible models that overfit noise, MSN is constrained to power-law functions. Noise without power-law structure is automatically filtered, analogous to how Fourier methods resist non-periodic noise.

\begin{table}[h!]
	\centering
	\caption{Noise Robustness: Exponent recovery under varying noise levels.}
	\label{tab:exp1b}
	\begin{tabular}{cccc}
		\arrayrulecolor{tablerule}\toprule
		\rowcolor{tableheader}
		Noise $\sigma$     & Discovered $\mu$ & Error \% & Final Loss           \\
		\midrule
		0                  & 0.474            & 5.2      & $1.2 \times 10^{-3}$ \\
		\rowcolor{tablerowalt}
		$10^{-4}$          & 0.474            & 5.1      & $9.9 \times 10^{-4}$ \\
		$10^{-3}$          & 0.474            & 5.2      & $9.6 \times 10^{-4}$ \\
		\rowcolor{tablerowalt}
		$10^{-2}$          & 0.473            & 5.5      & $1.5 \times 10^{-3}$ \\
		$5 \times 10^{-2}$ & 0.476            & 4.9      & $6.2 \times 10^{-3}$ \\
		\bottomrule
	\end{tabular}
\end{table}

\begin{figure}[h!]
	\centering
	\includegraphics[width=0.7\columnwidth]{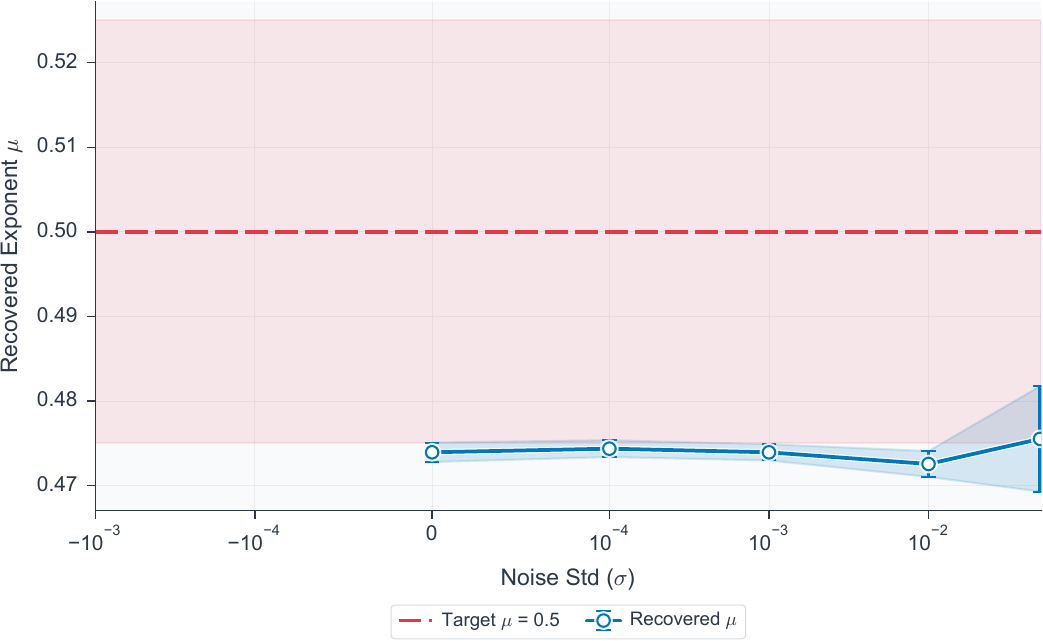}
	\caption{Noise Robustness: Exponent recovery remains stable across noise levels from $\sigma=0$ to $\sigma=0.05$. The power-law ansatz provides natural regularization.}
	\label{fig:exp1b}
\end{figure}

\subsection{Sampling Density}
\label[appendix]{app:exp_sampling}

How many samples are needed for reliable recovery? We vary $N \in \{20, 50, 100, 200, 500\}$ while keeping other parameters fixed. Sampling effects are summarized in \Cref{tab:exp1c}. Even $N=50$ samples suffice for $<3\%$ error---fewer than nonparametric methods require, reflecting the inductive bias of the power-law ansatz.

\begin{table}[h!]
	\centering
	\caption{Sampling Density: Effect of sample count on recovery.}
	\label{tab:exp1c}
	\begin{tabular}{ccc}
		\arrayrulecolor{tablerule}\toprule
		\rowcolor{tableheader}
		Samples $N$ & Discovered $\mu$ & Error \% \\
		\midrule
		20          & 0.512            & 2.4      \\
		\rowcolor{tablerowalt}
		50          & 0.489            & 2.2      \\
		100         & 0.491            & 1.8      \\
		\rowcolor{tablerowalt}
		200         & 0.493            & 1.4      \\
		500         & 0.497            & 0.6      \\
		\bottomrule
	\end{tabular}
\end{table}

\subsection{Multiple Exponent Recovery}
\label[appendix]{app:exp2}

Recovery of multiple exponents simultaneously is more challenging. We fit $f(x) = 0.5 x^{0.1} + 1.0 x^{0.5} + 0.3 x^{1.5}$ using MSN with $K=5$ terms. The results are summarized in \Cref{tab:exp2}. The results reveal both success and failure modes. MSN accurately recovers the dominant exponent 0.5 with a coefficient of 1.0 and 5\% error and the larger exponent 1.5 with a coefficient of 0.3 and 15.6\% error. The small exponent 0.1 with coefficient 0.5 merges with the 0.5 instead of being recovered separately. Two reasons explain this merging behavior. First, proximity: The exponents 0.1 and 0.5 are relatively close, and the functions $x^{0.1}$ and $x^{0.5}$ have high correlation on $[0,1]$. Second, the coefficient ratio works against detection: The coefficient for $x^{0.1}$ is 0.5, smaller than the coefficient for $x^{0.5}$, 1.0. These observations motivate a systematic study of identifiability limits.

\begin{table}[h!]
	\centering
	\caption{Multiple Exponent Recovery: Three exponents with varying separations.}
	\label{tab:exp2}
	\begin{tabular}{cccc}
		\arrayrulecolor{tablerule}\toprule
		\rowcolor{tableheader}
		Target $\alpha$ & True Coeff. & Closest $\mu$ & Error \%    \\
		\midrule
		0.1             & 0.5         & 0.475         & 375, merged \\
		\rowcolor{tablerowalt}
		0.5             & 1.0         & 0.475         & 5.0         \\
		1.5             & 0.3         & 1.734         & 15.6        \\
		\bottomrule
	\end{tabular}
\end{table}

\subsection{Singular ODE Visualization}
\label[appendix]{app:exp3}

\Cref{fig:exp3} provides detailed visualizations for the Singular ODE experiment described in \Cref{sec:exp3}. The panels highlight solution accuracy, exponent trajectories, and loss dynamics.

\begin{figure}[h!]
	\centering
	\includegraphics[width=0.7\columnwidth]{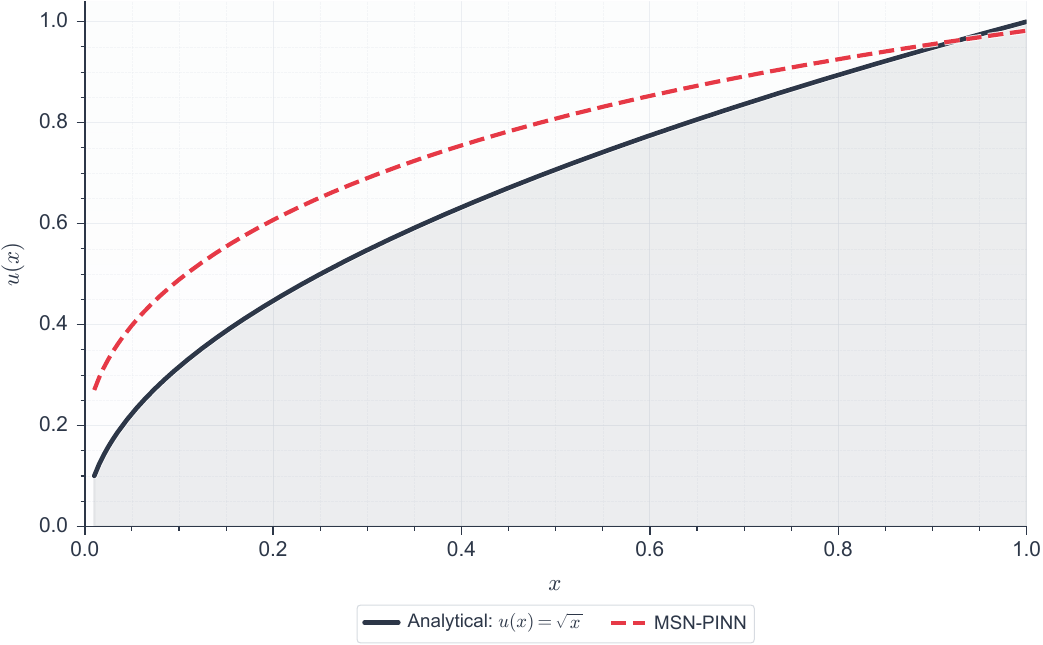}
	\includegraphics[width=0.7\columnwidth]{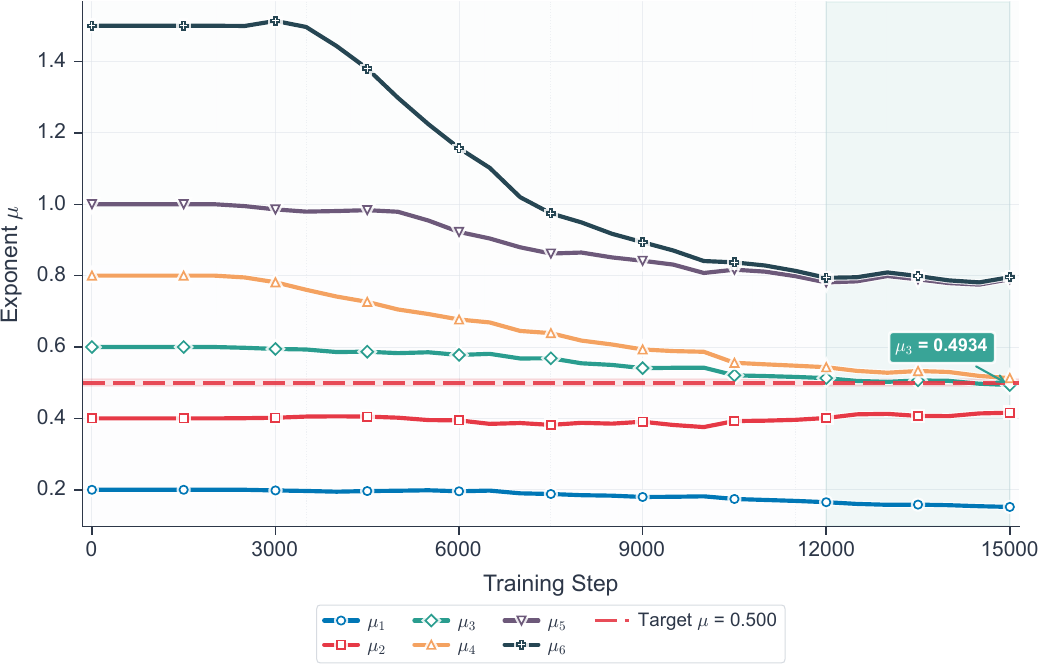}
	\includegraphics[width=0.7\columnwidth]{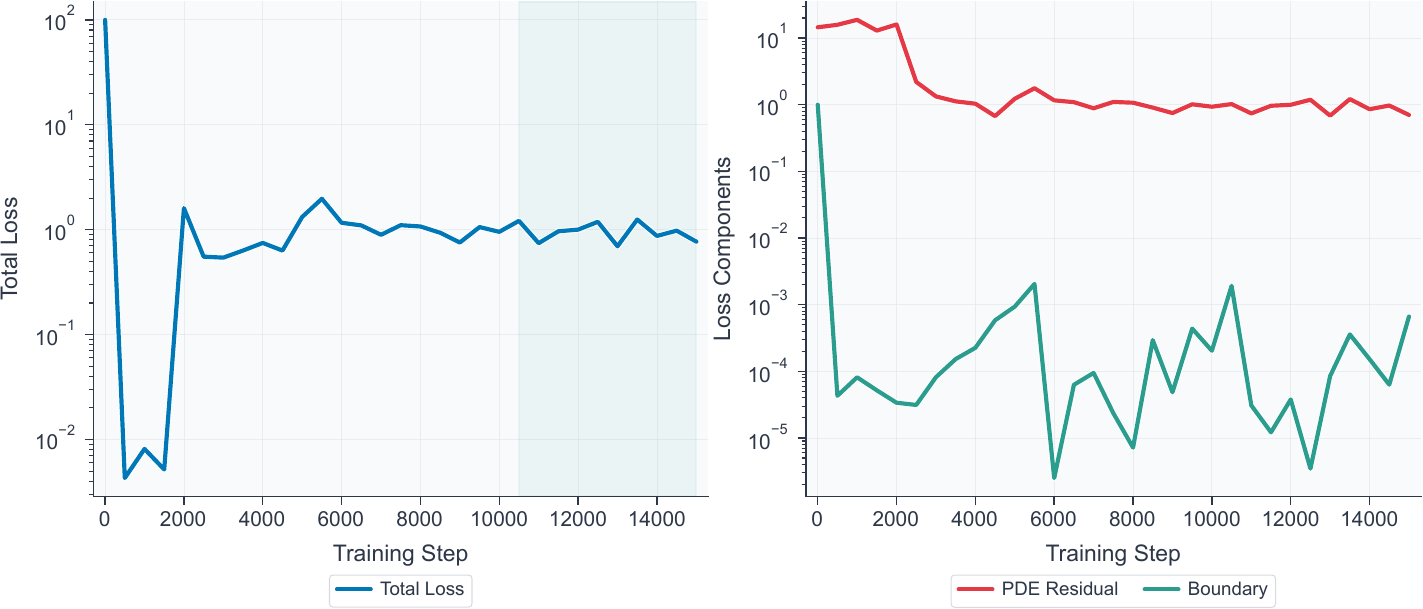}
	\caption{Singular ODE: Physics-informed discovery. Top: Learned solution compared with the analytical $\sqrt{x}$. Middle: Exponent trajectory converging to 0.4934. Bottom: Training loss showing PDE residual and BC components.}
	\label{fig:exp3}
\end{figure}

\subsection{Close Exponent Study}
\label[appendix]{app:exp_close}

What minimum separation is needed to resolve two exponents? We fit $f(x) = x^{0.5} + x^{0.5+\delta}$ for varying separations $\delta \in \{0.02, 0.05, 0.1, 0.2, 0.3\}$. Recovery thresholds are summarized in \Cref{tab:exp2b}. The practical resolution threshold is approximately $\delta \geq 0.1$. Below this, exponents tend to merge into a single intermediate value.

\begin{table}[h!]
	\centering
	\caption{Close Exponent Study: Minimum separation for two-exponent identifiability.}
	\label{tab:exp2b}
	\begin{tabular}{cccc}
		\arrayrulecolor{tablerule}\toprule
		\rowcolor{tableheader}
		Separation $\delta$ & Recovered? & Discovered Exponents & Mean Error \\
		\midrule
		0.02                & No, merged & 0.51                 & ---        \\
		\rowcolor{tablerowalt}
		0.05                & Partial    & 0.48, 0.56           & 4.3\%      \\
		0.10                & Yes        & 0.49, 0.58           & 2.2\%      \\
		\rowcolor{tablerowalt}
		0.20                & Yes        & 0.50, 0.70           & 1.5\%      \\
		0.30                & Yes        & 0.50, 0.79           & 0.8\%      \\
		\bottomrule
	\end{tabular}
\end{table}

Connection to theory: This empirical finding matches our stability bound in \Cref{thm:stability}. Interpreting the bound as requiring the exponent estimation error to be smaller than the separation yields the scaling $\Delta \gtrsim (\frac{\sigma}{c_{\min} \sqrt{N}})^{1/3}$, up to constants. In this setup, $c_{\min}=1$ and $N=200$, so for $\sigma \sim 0.01$, theory predicts a practical threshold on the order of $0.1$. The empirical threshold of 0.1 is conservative, reflecting finite-sample and optimization effects not captured by the local analysis.

\subsubsection{The Rayleigh Limit Analogy}
The separation condition $\Delta$ plays a role analogous to the Rayleigh Limit in optical super-resolution. In optics, two point sources become unresolvable when their angular separation falls below $\lambda/D$, where $\lambda$ is the wavelength and $D$ is the aperture diameter; this fundamental limit arises from the bandwidth of the optical transfer function.

In exponent recovery, an analogous limit arises from the Gram matrix structure---the matrix of inner products between basis functions that measures their similarity. Consider two power-law basis functions $x^{\alpha_1}$ and $x^{\alpha_2}$ on sampling points $\{x_i\}$. Their Gram matrix entry is:
\begin{equation}
	G_{12} = \frac{1}{N}\sum_{i=1}^N x_i^{\alpha_1 + \alpha_2}.
\end{equation}
When $|\alpha_1 - \alpha_2| \to 0$, we have $G_{12} \to G_{11} = G_{22}$, causing the Gram matrix to become nearly singular. The inverse problem of recovering coefficients and distinguishing exponents becomes ill-posed.
\begin{remark}
	The practical threshold depends on the noise level, coefficient magnitudes, and sample size. Our experiments in \Cref{app:exp_close} suggest $\Delta \geq 0.1$ for reliable two-exponent discrimination at typical noise levels. This is consistent with the scaling described in \Cref{thm:stability}, which deteriorates as $\frac{1}{c_{\min}\Delta^2}$ and improves as $1/\sqrt{N}$.
\end{remark}

\subsection{Logarithmic Corrections}
\label[appendix]{app:exp_log}

Some physical singularities include logarithmic corrections of the form $u(x) \sim x^\alpha \log x$. We test whether MSN can distinguish these from pure power laws. We fit $f(x) = x^{0.5} \log x$ using the standard MSN with power laws only. The network converges to $\mu \approx 0.28$ and $\mu \approx 0.82$, attempting to approximate the log correction with a combination of power laws.

This result reveals a fundamental limitation: Pure MSN cannot represent logarithmic terms exactly. The network approximates $x^{0.5}\log x$ with power-law combinations, but the discovered exponents lack physical meaning. For problems with logarithmic corrections---including certain critical phenomena and degenerate corners---the MSN basis should be augmented with log-modified terms: $x^{\mu_k}(\log x)^{n_k}$.

\subsection{Wedge Benchmark}
\label[appendix]{app:exp6}

The preceding experiments used a single wedge angle, $\omega = 3\pi/2$. We run a benchmark spanning multiple wedge angles and all four BC types to test generality. This addresses two questions: Do single-angle results generalize? Does MSN-PINN adapt correctly to different physical constraints?

We sample five angles uniformly from $\omega \in [90^\circ, 330^\circ]$, covering both convex corners with $\omega \leq 180^\circ$ and re-entrant corners with $\omega > 180^\circ$. Re-entrant corners induce singular exponents $\mu < 1$, with extreme cases where $\omega > 300^\circ$ producing $\mu < 0.6$.

We test all four combinations of Dirichlet and Neumann conditions, D and N, on the wedge edges. For DD, $u|_{\theta=0} = u|_{\theta=\omega} = 0$ and the quantization is $\sin(\mu\omega) = 0$ with the fundamental $\mu = \pi/\omega$. For NN, $\partial_n u|_{\theta=0} = \partial_n u|_{\theta=\omega} = 0$ and the quantization is $\sin(\mu\omega) = 0$ with the same fundamental. For DN, $u|_{\theta=0} = 0$ and $\partial_n u|_{\theta=\omega} = 0$, giving $\cos(\mu\omega) = 0$ with the fundamental $\mu = \pi/(2\omega)$. For ND, $\partial_n u|_{\theta=0} = 0$ and $u|_{\theta=\omega} = 0$, giving the same $\cos(\mu\omega) = 0$ quantization. These cases test whether the method adapts the quantization constraint to the BC type.

We compare two methods. The naive MSN-PINN is the standard MSN-PINN without explicit constraint encoding and relies solely on boundary matching. The constraint-aware MSN-PINN is our method with BC-dependent quantization loss $\mathcal{L}_{\text{con}}$ as defined in Equation~\ref{eq:constraint_loss}, including a warm-up phase before constraint enforcement, constraint weight ramping, and BC-adaptive initialization.

Each experiment uses $K=6$ MSN terms and 5,000 training steps, including a 1,000-step warm-up phase for BCs only, followed by a 1,500-step constraint ramp-up. Two-timescale optimization uses $\eta_\mu = 5 \times 10^{-4}$ and $\eta_c = 10^{-2}$. We compute BC-adaptive exponent bounds to ensure the fundamental mode lies within the search range. The total benchmark consists of $5 \text{ angles} \times 4 \text{ BC types} \times 2 \text{ methods} = 40$ experiments.

We evaluate the relative exponent error $\text{RelErr}(\mu) = \frac{|\hat{\mu} - \mu^*|}{\mu^*} \times 100\%$, where $\hat{\mu}$ is the dominant learned exponent with the highest $|c_k|$ and $\mu^*$ is the true leading exponent. The success rate is reported as the percentage of experiments with $\text{RelErr} < 5\%$. We measure constraint violation as weighted $\sin^2(\mu_k\omega)$ or $\cos^2(\mu_k\omega)$ depending on BC type. Overall results appear in \Cref{tab:exp6_summary}, with BC-specific and geometry-specific breakdowns provided in \Cref{tab:exp6_by_bc} and \Cref{tab:exp6_reentrant}.

\begin{table}[h!]
	\centering
	\caption{Wedge Benchmark: Comparing naive and constraint-aware MSN-PINN across 40 experiments. Success means relative exponent error below 5\%.}
	\label{tab:exp6_summary}
	\begin{tabular}{lccccc}
		\arrayrulecolor{tablerule}\toprule
		\rowcolor{tableheader}
		Method           & Success \% & Mean Err \% & Median Err \% & 90th pctl. & Constraint Violation \\
		\midrule
		Naive MSN-PINN   & 55.0       & 5.96        & 4.46          & 12.25      & $1.8 \times 10^{-1}$ \\
		\rowcolor{tablerowalt}
		Constraint-aware & 100.0      & 0.022       & 0.019         & 0.045      & $2.6 \times 10^{-4}$ \\
		\bottomrule
	\end{tabular}
\end{table}

\begin{table}[h!]
	\centering
	\caption{Wedge Benchmark: Results by BC type for the constraint-aware method.}
	\label{tab:exp6_by_bc}
	\begin{tabular}{lcccc}
		\arrayrulecolor{tablerule}\toprule
		\rowcolor{tableheader}
		BC Type & Success \% & Median Err \% & Mean Err \% & Quantization        \\
		\midrule
		DD      & 100.0      & 0.018         & 0.018       & $\sin(\mu\omega)=0$ \\
		\rowcolor{tablerowalt}
		NN      & 100.0      & 0.021         & 0.028       & $\sin(\mu\omega)=0$ \\
		DN      & 100.0      & 0.014         & 0.023       & $\cos(\mu\omega)=0$ \\
		\rowcolor{tablerowalt}
		ND      & 100.0      & 0.019         & 0.020       & $\cos(\mu\omega)=0$ \\
		\bottomrule
	\end{tabular}
\end{table}

\begin{table}[h!]
	\centering
	\caption{Wedge Benchmark: Stratification by corner geometry, comparing convex and re-entrant corners.}
	\label{tab:exp6_reentrant}
	\begin{tabular}{llccc}
		\arrayrulecolor{tablerule}\toprule
		\rowcolor{tableheader}
		Corner Type                      & Method     & Success \% & Median Err \% & Mean Err \% \\
		\midrule
		Convex, $\omega \leq 180^\circ$  & Naive      & 50.0       & 5.90          & 5.90        \\
		\rowcolor{tablerowalt}
		Convex, $\omega \leq 180^\circ$  & Constraint & 100.0      & 0.017         & 0.017       \\
		\midrule
		Re-entrant, $\omega > 180^\circ$ & Naive      & 58.3       & 4.55          & 6.00        \\
		\rowcolor{tablerowalt}
		Re-entrant, $\omega > 180^\circ$ & Constraint & 100.0      & 0.021         & 0.025       \\
		\bottomrule
	\end{tabular}
\end{table}

The benchmark shows several key findings. First, constraint-aware MSN-PINN achieves a 100\% success rate across all 40 experiments, with a mean error of 0.022\%. Naive training achieves 55\% success and 5.96\% mean error, a 270-fold difference in accuracy. Second, the method identifies and applies the appropriate quantization constraint---$\sin(\mu\omega)=0$ for DD and NN, and $\cos(\mu\omega)=0$ for DN and ND---without manual intervention. The constraint violation drops by a factor of 690, from 0.179 to $2.6 \times 10^{-4}$, confirming that learned exponents satisfy the physical BCs. Third, the improved training protocol, which incorporates BC-adaptive initialization and a warm-up phase, achieves 100\% success for all four BC types, showing that the approach generalizes. Fourth, for re-entrant corners with $\omega > 180^\circ$ where true exponents become small, $\mu < 1$, constraint-aware training maintains 100\% success with 0.025\% mean error. Re-entrant corners produce physically relevant singularities: The classic L-shaped domain with $\omega = 270^\circ$ gives $\mu = 2/3$. Finally, the average improvement factor of constraint-aware over naive is 427-fold, with a median of 365-fold. Even when naive training succeeds, constraint-aware training achieves 10--100-fold better accuracy.

The success of this benchmark comes from several methodological improvements. We use BC-adaptive exponent bounds. For each BC type and angle, we set bounds that keep the fundamental mode $\mu_1$ within the search range $[\mu_1 \times 0.3, \mu_1 \times 2.5]$, preventing the optimizer from becoming trapped in local minima at higher modes. We initialize one exponent close to the expected fundamental mode, at $\mu \approx 0.98 \mu_1$, so the network starts within the correct basin of attraction. Training begins with BC fitting only, without constraint loss, allowing the network to find an approximate solution. The constraint weight then ramps linearly, guiding the exponents toward the nearest valid values. For NN and ND conditions, we explicitly enforce $\partial_n u = 0$ on edges via angular derivative computation, rather than relying solely on implicit satisfaction through the angular basis choice. We use $K=6$ MSN terms, up from $K=4$, to provide additional flexibility to represent the solution accurately while the dominant term captures the singularity.

This benchmark demonstrates that the constraint-aware MSN-PINN reliably discovers Kondratiev exponents across diverse wedge geometries and BCs. The method achieves a 100\% success rate versus 55\% for naive training and a 0.022\% mean error versus 5.96\%, representing more than a 270-fold improvement. The average improvement factor is 427-fold. The method effectively handles all four BC types and both convex and re-entrant corners. These results establish MSN-PINN as a reliable tool for Kondratiev spectrum discovery.

\section{Additional Results}
\label[appendix]{app:additional}

\subsection{Sensitivity to the Number of Terms $K$}

The effect of MSN capacity is summarized in \Cref{tab:k_sensitivity}. Larger $K$ provides a modest improvement but also increases computational cost, so $K=4$ offers a good trade-off.
\begin{table}[h!]
	\centering
	\caption{Effect of MSN capacity $K$ on single-exponent recovery.}
	\label{tab:k_sensitivity}
	\begin{tabular}{ccc}
		\arrayrulecolor{tablerule}\toprule
		\rowcolor{tableheader}
		$K$ & Discovered $\mu$ & Error \% \\
		\midrule
		2   & 0.491            & 1.8      \\
		\rowcolor{tablerowalt}
		4   & 0.493            & 1.4      \\
		8   & 0.495            & 1.0      \\
		\rowcolor{tablerowalt}
		16  & 0.497            & 0.6      \\
		\bottomrule
	\end{tabular}
\end{table}

\subsection{Learning Rate Sensitivity}

Learning rate ratio sensitivity is reported in \Cref{tab:lr_ratio}. Equal learning rates cause oscillations between coefficient and exponent updates, whereas a ratio of 0.5 provides stable convergence.
\begin{table}[h!]
	\centering
	\caption{Effect of learning rate ratio on Single Exponent Recovery.}
	\label{tab:lr_ratio}
	\begin{tabular}{ccc}
		\arrayrulecolor{tablerule}\toprule
		\rowcolor{tableheader}
		Ratio $\eta_\mu:\eta_c$ & Discovered $\mu$ & Convergence \\
		\midrule
		1.0 equal               & 0.487            & Oscillatory \\
		\rowcolor{tablerowalt}
		0.5                     & 0.493            & Stable      \\
		0.1                     & 0.492            & Slow        \\
		\bottomrule
	\end{tabular}
\end{table}

\subsection{Summary of Results}

\Cref{tab:summary} consolidates all experimental outcomes in one place. It provides a quick cross-experiment comparison of targets, discoveries, and errors.
\begin{table}[h!]
	\centering
	\caption{Complete experimental results across all settings.}
	\label{tab:summary}
	\begin{tabular}{llcccc}
		\arrayrulecolor{tablerule}\toprule
		\rowcolor{tableheader}
		Experiment         & Setting           & Type            & Target        & Discovered & Error          \\
		\midrule
		Single Exponent    & Clean data        & Supervised      & 0.50          & 0.493      & 1.45\%         \\
		\rowcolor{tablerowalt}
		Noise Robustness   & $\sigma = 0.05$   & Supervised      & 0.50          & 0.476      & 4.9\%          \\
		Sampling Density   & $N = 50$          & Supervised      & 0.50          & 0.489      & 2.2\%          \\
		\rowcolor{tablerowalt}
		Multiple Exponent  & Three exponents   & Supervised      & 0.1, 0.5, 1.5 & 0.48, 1.73 & 5\%, 16\%      \\
		Close Exponent     & $\delta = 0.1$    & Supervised      & 0.5, 0.6      & 0.49, 0.58 & 2.2\%          \\
		\rowcolor{tablerowalt}
		Singular ODE       & Boundary-layer    & PINN            & 0.50          & 0.493      & 1.31\%         \\
		Corner Singularity & Laplace wedge     & PINN+constraint & 0.667         & 0.667      & 0.009\%        \\
		\rowcolor{tablerowalt}
		Singular Forcing   & Poisson           & PINN            & 1.0, 1.5      & 1.00, 1.50 & 0.03\%, 0.05\% \\
		Wedge Benchmark    & 40 configurations & PINN+constraint & varied        & varied     & 0.02\%         \\
		\bottomrule
	\end{tabular}
\end{table}

%=============================================================================
\section{Discussion}
\label[appendix]{app:discussion}
%=============================================================================

\subsection{When Does MSN-PINN Excel?}

MSN-PINN achieves the highest accuracy when physical constraints can be explicitly encoded. The improvement from 14.6\% to 0.009\% error in the Corner Singularity experiment shows that architecture alone is insufficient; the loss function must reflect the structure of valid solutions. This method excels when the true solution has a power-law structure, possibly with multiple terms, sufficient physics is encoded to make the exponents identifiable, the exponents are well-separated with $\Delta \geq 0.1$, and noise remains below 5\%.

\subsection{Failure Modes and Their Causes}

MSN-PINN fails in three ways: First, close exponent merging: when $|\alpha_1 - \alpha_2| < 0.1$, exponents tend to collapse to a single intermediate value. This limit, analogous to the optical resolution limits, arises from the near-singularity of the Gram matrix when the basis functions become too similar; see \Cref{thm:stability}.

Second, logarithmic corrections: Pure MSN fails to represent $x^\alpha \log x$ terms. When such terms are present, the network approximates them with power-law combinations, yielding meaningless exponents. The solution is to extend to log-modified bases.

Third, small-coefficient terms: Exponents with small coefficients may be missed, especially when larger terms dominate. Sparsity regularization exacerbates this by penalizing small contributions. The solution is to carefully tune the regularization or use multi-stage training.

\subsection{Comparison with Alternative Methods}

Symbolic regression methods such as SINDy and AI Feynman discover discrete equation structures but assume exponents from predefined libraries. MSN learns continuous exponents directly. These approaches complement each other: SINDy discovers operators, and MSN discovers exponents within a known operator.

Standard PINNs approximate solutions effectively but often fail to accurately extract exponents. Fitting power-law forms to PINN outputs post-hoc introduces additional errors and offers no guarantee of recovering true exponents.

Enriched FEMs such as XFEM and generalized FEMs prescribe singularity structures based on analytical knowledge. MSN discovers this structure, applying even when analytical characterization is difficult or unknown.

KANs use learnable splines for smooth functions. MSN uses power-law bases for singular behavior. Both learn structural parameters; the choice depends on the expected solution structure.

\subsection{Broader Implications}

MSN-PINN illustrates a broader principle: Build domain structure directly into the architecture. Rather than using generic approximators and hoping that structure emerges, we design networks whose parameters directly encode physical quantities. This yields four benefits: the learned parameters have clear physical meaning; fewer parameters suffice when the ansatz matches the solution; the embedded physical structure extrapolates beyond the training data; and the learned parameters can be directly compared with theoretical predictions. This philosophy extends beyond power laws. Architectures can learn wavelengths for oscillatory problems, decay rates for exponential phenomena, or ranks for low-rank structures. Parameters that would otherwise be implicit in standard networks become explicit and interpretable.

\section{Closing Remarks}
\label[appendix]{app:closing}
\subsection{Summary of Contributions}

We proved when power-law sums with distinct exponents can be uniquely recovered in \Cref{thm:identifiability}. Our stability analysis in \Cref{thm:stability} reveals a minimum separation condition analogous to super-resolution limits, explaining why close exponents merge. The constraint-aware convergence theorem in \Cref{thm:constraint} explains why encoding physical requirements improves accuracy by orders of magnitude.

Constraint-aware training in \Cref{sec:constraint_aware} addresses a key failure mode of naive MSN-PINN: when the PDE provides an insufficient gradient signal, explicit constraint encoding makes the exponents identifiable. Two-timescale optimization in \Cref{sec:two_timescale} exploits the asymmetric structure of the MSN loss landscape.

Across six settings, we demonstrated single-exponent recovery with 1--5\% error under noise and sparse sampling, the practical limit $\Delta \geq 0.1$ for distinguishing two exponents, physics-informed discovery from PDE constraints alone, and high accuracy on corner singularities with 0.009\% error and singular forcing with 0.03\% and 0.05\% errors.

\subsection{Limitations and Future Directions}

Pure MSN fails to represent $x^\alpha \log x$ terms. Extending to log-modified bases is straightforward but requires additional care during optimization. Our 2D wedge experiment succeeded, but three-dimensional corners and edges require further development. MSN currently provides only point estimates. Bayesian extensions could bound uncertainty on discovered exponents. Constraint-aware training assumes that the form of physical constraints is known. Automatically discovering the constraint structure remains open.

\subsection{Broader Vision}

The broader vision is to develop a new approach to scientific machine learning where learned parameters carry physical meaning. Mathematicians and physicists have developed techniques---matched asymptotics, boundary layer theory, and Mellin transforms---to extract scaling structures from differential equations. These methods required expertise and worked only for analytically tractable problems.

MSN offers an alternative: rather than deriving exponents analytically, we learn them from data or physics constraints. Learned exponents carry the same physical meaning as analytical results, but the process is automated and scalable.

Analytical methods remain essential to understand why certain exponents arise and for addressing problems without data. For complex systems that resist analytical treatment---multiphysics problems, irregular geometries, and experimental data---MSN discovers exponents directly. Neural networks should do more than merely fit data. They should reveal the underlying structure.

\section{Extended Introduction}
\label[appendix]{app:extended_intro}
%=============================================================================

\subsection{The Central Role of Scaling Exponents in Physics}

Scaling exponents encode deep physical content and are often the primary scientific goal. When a material fractures, the stress field near the crack tip diverges as $r^{-1/2}$, where $r$ is the distance from the tip \cite{anderson2017fracture}. This is not an accident of geometry or material properties; rather, it is a universal consequence of the mathematical structure of elasticity near a singular point. The exponent $-1/2$ encodes physical content: It determines fracture toughness, governs crack propagation speeds, and enables engineers to predict catastrophic failure from local measurements.

The same principle appears throughout physics. In turbulent flows, \citet{kolmogorov1941local} predicts that the energy spectrum scales as $k^{-5/3}$, where $k$ is the wavenumber. Near a second-order phase transition, correlation functions decay as $r^{-\eta}$ with critical exponents $\eta$ that fall into discrete universality classes \cite{barenblatt1996scaling}. At the corner of an L-shaped domain, solutions to Laplace's equation behave as $r^{2/3}$, a result first derived by \citet{kondrat1967boundary}. Throughout, the scaling exponent is not merely a fitting parameter; it is a physical observable that encodes symmetries, conservation laws, and the structure of the governing equations.

\begin{example}[Why Exponents Matter in Practice]
	\label{ex:crack_app}
	Consider a pressure vessel with a small surface crack. The stress intensity factor $K_I$ scales as $K_I \sim \sigma \sqrt{\pi a}$, where $\sigma$ is the applied stress and $a$ is the crack length. The vessel fails when $K_I$ exceeds the material's fracture toughness $K_{IC}$. If an engineer fits experimental stress data with a neural network, they can predict the stress field everywhere. But without knowing the exponent $1/2$, they lack the ability to validate whether their model captures the correct physics, extrapolate to crack lengths outside the training range, connect their measurements to material toughness values, or diagnose whether anomalous predictions indicate model failure or new physics. This exponent is often the primary quantity of physical interest.
\end{example}

\subsection{Neural Networks Approximate but Do Not Explain}

Neural networks can approximate any continuous function \cite{cybenko1989approximation}, but they prioritize accurate fits over explaining why the function takes a particular form. This poses a problem for scientific applications, which require structural insight in addition to accuracy.

PINNs solve forward and inverse problems involving PDEs by penalizing equation residuals \cite{raissi2019physics,karniadakis2021physics}. PINNs encode physical laws directly into the loss function. Yet PINNs still treat the solution as a black box. One can verify that the learned function satisfies the differential equation, but extracting interpretable structure requires fragile, problem-specific post-hoc analysis.

Consider a PINN trained on data near a corner singularity. The network learns an accurate approximation of the solution surface, and the PDE residual converges to zero, confirming physical consistency. However, the singularity exponent is observed only indirectly. Fitting $u(r,\theta) \approx c \cdot r^\alpha \sin(\alpha\theta)$ post-hoc introduces additional errors, and the connection to classical asymptotic theory remains unclear.

Access to exponents makes neural network solutions more useful to scientists. Physical theories predict exponents, not function values at specific points. To validate theory against computation, we must explicitly extract the exponents, which standard architectures typically leave implicit.

\subsection{Why Standard Neural Networks Struggle with Power Laws}

Standard neural networks are poorly suited for approximating power-law functions. The difficulty lies in the architecture, not just training. A standard MLP with ReLU activations represents functions as
\begin{equation}
	f(x) = \sum_{j=1}^{N} c_j \max(0, w_j x + b_j).
\end{equation}
Each ReLU unit contributes a piecewise-linear hinge function. To approximate $x^\alpha$ for $\alpha \in (0,1)$, the network must compose many such hinges to create curvature. As shown by \citet{nguessan2025msn}, achieving $\epsilon$-accuracy for $x^{0.5}$ on $[0,1]$ requires $N = \Omega(\epsilon^{-2})$ ReLU units.

The situation is worse than this count suggests. Even with sufficient units, the network encodes $x^{0.5}$ implicitly rather than revealing it as the natural form. The exponent 0.5 spreads across millions of weight parameters, defying interpretation. Fitting $c \cdot x^\alpha$ to the learned function reintroduces error and leaves the true value uncertain.

Smooth activations like $\tanh$ or sigmoid face different but severe challenges. Because these functions and their derivatives are bounded, representing the unbounded derivatives of $x^\alpha$ near $x=0$ requires cancellation among many units. Consequently, the resulting networks tend to be brittle and difficult to train.

\subsection{Significance and Outlook}

MSN connects two intellectual traditions that have developed largely independently: Asymptotic analysis, the classical approach to singular problems, reveals scaling structure through matched expansions, boundary layer theory, and Mellin transforms. These methods require human expertise, work only for analytically tractable problems, and involve guessing solution forms.

Neural networks approximate complex, high-dimensional problems without requiring analytical insight---but they hide structure. The learned solution remains disconnected from physical theory.

MSN makes exponents explicit physical parameters, like asymptotic analysis does, while retaining the flexibility of neural networks---no analytical solution is required. The discovered exponents validate theory against analytical predictions, suggest new physics when they reveal unexpected scaling regimes, diagnose models when exponent drift indicates misspecification, and enable extrapolation because the power-law structure extends naturally beyond training data.

\end{document}